\documentclass{article} % For LaTeX2e
\usepackage{iclr2026_conference,times}
\iclrfinalcopy

% Optional math commands from https://github.com/goodfeli/dlbook_notation.

\usepackage{hyperref}
\usepackage{url}
\usepackage[utf8]{inputenc} % allow utf-8 input
\usepackage[T1]{fontenc}    % use 8-bit T1 fonts
\usepackage{hyperref}       % hyperlinks
\usepackage{url}            % simple URL typesetting
\usepackage{booktabs}       % professional-quality tables
\usepackage{amsfonts}       % blackboard math symbols
\usepackage{nicefrac}       % compact symbols for 1/2, etc.
\usepackage{microtype}      % microtypography
\usepackage{xcolor}         % colors
\usepackage{graphicx}
\usepackage{subfigure}
\usepackage{mathtools}
\usepackage{amsthm}
\usepackage{amsmath}
\usepackage{amssymb}
\usepackage{extarrows}
\usepackage{diagbox}
\usepackage{microtype}
\usepackage{multirow}
\usepackage[most]{tcolorbox}
\usepackage{makecell}

\newcommand{\method}{\texttt{EBM-CoT}}
%%%%% NEW MATH DEFINITIONS %%%%%

\usepackage{amsmath,amsfonts,bm}

% Mark sections of captions for referring to divisions of figures

% Highlight a newly defined term

% Figure reference, lower-case.

% Figure reference, capital. For start of sentence

% Section reference, lower-case.

% Section reference, capital.

% Reference to two sections.

% Reference to three sections.

% Reference to an equation, lower-case.
\def\eqref#1{equation~\ref{#1}}
% Reference to an equation, upper case

% A raw reference to an equation---avoid using if possible

% Reference to a chapter, lower-case.

% Reference to an equation, upper case.

% Reference to a range of chapters

% Reference to an algorithm, lower-case.

% Reference to an algorithm, upper case.

% Reference to a part, lower case

% Reference to a part, upper case

\def\1{\bm{1}}

% Random variables

% rm is already a command, just don't name any random variables m

% Random vectors

% Elements of random vectors

% Random matrices

% Elements of random matrices

% Vectors

% Elements of vectors

% Matrix

% Tensor
\DeclareMathAlphabet{\mathsfit}{\encodingdefault}{\sfdefault}{m}{sl}
\SetMathAlphabet{\mathsfit}{bold}{\encodingdefault}{\sfdefault}{bx}{n}

% Graph

% Sets

% Don't use a set called E, because this would be the same as our symbol
% for expectation.

% Entries of a matrix

% entries of a tensor
% Same font as tensor, without \bm wrapper

% The true underlying data generating distribution

% The empirical distribution defined by the training set

% The model distribution

% Stochastic autoencoder distributions

 % Laplace distribution

% Wolfram Mathworld says $L^2$ is for function spaces and $\ell^2$ is for vectors
% But then they seem to use $L^2$ for vectors throughout the site, and so does
% wikipedia.

 % See usage in notation.tex. Chosen to match Daphne's book.

\DeclareMathOperator*{\argmin}{arg\,min}

\definecolor{text}{rgb}{0.4,0.1,0.4}
\newtheorem{theorem}{Theorem}

\newtheorem{definition}{Definition}

\usepackage{algorithm}
\usepackage{algorithmic}

\renewcommand\footnotemark{}
\title{From Coefficients to Directions: Rethinking Model Merging with Directional Alignment}

% Authors must not appear in the submitted version. They should be hidden
% as long as the \iclrfinalcopy macro remains commented out below.
% Non-anonymous submissions will be rejected without review.

\author{Zhikang Chen\textsuperscript{1}\quad Sen Cui\textsuperscript{2}\quad Deheng Ye\textsuperscript{3}\quad 
Min Zhang\textsuperscript{4}\quad Gang Niu\textsuperscript{5}\quad Yu Zhang\textsuperscript{\dag~7}\\  \textbf{Masashi Sugiyama\textsuperscript{5,6}}\quad \textbf{Tingting Zhu\textsuperscript{\dag~1}}\\
\textsuperscript{1}University of Oxford\quad
\textsuperscript{2}Tsinghua University\quad
\textsuperscript{3}Nanyang Technological University\\
\textsuperscript{4}East China Normal University\quad
\textsuperscript{5}RIKEN \quad
\textsuperscript{6}The University of Tokyo \\
\textsuperscript{7}Southern University of Science and Technology\\
\thanks{\textsuperscript{\dag}Corresponding authors}
}

% The \author macro works with any number of authors. There are two commands
% used to separate the names and addresses of multiple authors: \And and \AND.
%
% Using \And between authors leaves it to \LaTeX{} to determine where to break
% the lines. Using \AND forces a linebreak at that point. So, if \LaTeX{}
% puts 3 of 4 authors names on the first line, and the last on the second
% line, try using \AND instead of \And before the third author name.

\makeatletter
  % 去掉页眉的横线
\makeatother
%\iclrfinalcopy % Uncomment for camera-ready version, but NOT for submission.
\begin{document}

\maketitle

\begin{abstract}
% Model merging is a recently emerging paradigm that aims to integrate multiple independently trained models into a single model without joint retraining. Previous studies have demonstrated the effectiveness of combining parameters from multiple models through strategies such as parameter decomposition, coefficient optimization and subspace learning. These methods significantly reduce the need for expensive joint training and have shown strong empirical performance across diverse tasks.    However, these approaches predominantly treat merging as a problem of parameter space decomposition or fusion coefficient optimization, while largely overlooking the role of directional information in both parameter space and feature space. Naive model merging often disrupts the underlying simplex \emph{Equiangular Tight Frame}~(ETF) geometry, leading to degraded performance.  
% Inspired by the geometric structure observed under Neural Collapse, we emphasize the importance of \emph{directional alignment} and introduce a unified geometric framework,  
% \emph{Merging with Directional Alignment}~(\method{}), which aligns directional structures consistently  
% in both the parameter and feature spaces. Our theoretical analysis demonstrates that directional alignment improves structural coherence and tightens generalization bounds. Extensive experiments across benchmarks, model scales, and task configurations confirm the effectiveness of our approach.

Model merging has emerged as a practical paradigm for integrating multiple independently trained models into a single model without joint retraining. Previous studies have demonstrated the effectiveness of combining parameters through strategies such as parameter decomposition, coefficient optimization, and subspace learning, significantly reducing the need for expensive joint training and achieving strong empirical performance across diverse tasks.
However, these approaches predominantly treat merging as a problem of parameter space decomposition or fusion coefficient optimization, while overlooking the critical role of directional information in both parameter and feature spaces. In practice, naïve merging introduces inconsistencies in dominant parameter directions and disrupts structural coherence across models, which can degrade performance. Moreover, coefficient-based optimization methods implicitly assume compatible feature-space directions across models. However, Neural Collapse indicates that class features follow structured directional patterns, which may differ across independently trained models, making coefficient optimization alone insufficient. In this work, we emphasize the importance of \emph{directional alignment} and introduce a unified geometric framework,
\emph{Merging with Directional Alignment} (\method{}), which aligns directional structures consistently in both the parameter and feature spaces. Our analysis shows that directional alignment improves structural coherence, and extensive experiments across benchmarks, model scales, and task configurations further validate the effectiveness of our approach.
\end{abstract}

\section{Introduction}

%%%%  微调技术对于大模型适应下游任务至关重要，但是随着下游任务的增多，如果为每个下游任务设计一个模块进行适应，那么对于存储空间的占用会呈现线性增长。因此，为了更加高效的利用存储空间，model merging技术应运而生。
%%%% 之前的方法，主要分为两个方向，一个是data-free的方法，一个是data-based方法来进行融合系数的优化。基于data-free的方法，主要是想通过参数空间本身的性质，使用不同的合并方法来降低不同任务之间融合时带来的干扰，xxx；基于data-based方法来进行融合系数的优化方法，通过将不同任务的数据进行训练，从而得到不同任务参数之间融合的系数，xxx
%%%% 早期方法如 Task Arithmetic (Ilharco et al., 2023) 通过线性叠加任务向量实现模型融合，但容易受到任务冲突的影响。为缓解干扰，后续研究提出了稀疏化（Yadav et al., 2023）、mask 学习 (Tang et al., 2023)、或在推理阶段动态调节融合系数的 AdaMerging (Yang et al., 2024) 等方法。然而，这些方法往往忽略了共享知识的保持或丢弃了关键的任务特定信息。近期工作开始探索更结构化的建模视角，例如 DOGE 框架将模型融合视为约束优化问题，并通过自适应投影梯度下降在共享子空间内优化，以在缓解冲突的同时保留共享知识。与此同时，TSV 方法则关注于任务矩阵的低秩结构，提出利用奇异向量表示任务方向，通过压缩和干扰消除实现更高效的融合。这些进展表明，从优化和子空间结构两个角度建模任务关系，正在成为模型融合领域的重要趋势。

%%%% 我们重新审视了在model merging领域中，方向的重要性，包括参数方向和特征方向。任务在充分训练后，应该将整个参数或者特征空间进行均匀划分，但是model merging时，如果只使用融合系数的调整，而不考虑方向的调整，会使得其融合后，在同一个空间中，无法进行均匀的划分，如图1所示。因此，我们基于这一点，提出了方向对齐的方法。在参数空间中，我们通过将去噪重构后的参数空间根据预先设定好的单纯形来进行方向上的对齐，使得能够将参数空间进行均匀的划分；在特征空间中，我们使用神经坍塌的思想，将特征空间通过单纯性来进行方向上的对齐，使得其更加逼近multi-task的效果。
%%%% 我们的贡献主要有以下三点：
%%%%% 我们重新思考了方向对齐在model merging的重要性，而不仅仅是融合系数的重要性。
%%%%% 我们在参数空间和特征空间进行了方向对齐的验证，包括理论和实践，结果均展现了其带来的优势。
%%%%% 我们在多个数据集以及多种任务上进行了相应的测试，结果均显示了我们方法的先进性

Fine-tuning has become the standard approach for adapting large pretrained models to downstream tasks \citep{hu2022lora, muqeeth2024learning}. However, as the number of tasks grows, maintaining a separate set of tuned parameters or adapters for each task leads to storage costs that scale linearly with the task count. In practical scenarios such as edge deployment and multi-task services, this overhead quickly becomes prohibitive. To address this issue, model merging \citep{ilharco2022editing} has emerged. By combining task-specific fine-tuned weights into a single shared model, it significantly reduces storage requirements while retaining task performance.

Existing model merging methods can be broadly categorized into two families: the data-free approach and the data-based approach. Data-free approaches mostly exploit the algebraic properties of the parameter space, employing strategies such as weighted averaging \citep{ilharco2022editing, matena2022merging}, subspace projection \citep{wei2025modeling}, or matrix decomposition \citep{gargiulo2025task} to alleviate task interference. These methods do not require access to task data, which makes them attractive in terms of privacy and scalability. In contrast, data-based approaches leverage real or synthetic data to learn or optimize fusion coefficients \citep{yang2023adamerging, tang2023concrete}, thus preserving the performance of every task after merging. Although both paradigms have clear advantages, the former is broadly applicable as it only leverages the intrinsic properties of parameters, and the latter offers stronger performance guarantees, most of the existing work has primarily focused on fusion coefficient optimization. What has been largely overlooked is the role of directional information, i.e., how the direction of high-dimensional parameter or feature space fundamentally shapes the behavior of the merged model. This raises an important question:

\begin{tcolorbox}[
    colback=white,         
    colframe=gray,         
    arc=3mm,                
    boxrule=0.8pt,         
    shadow={0.5mm}{-0.5mm}{0mm}{black!50!white} 
]
\itshape 
Should we jointly optimize both aspects for more effective model fusion, since vectors in the parameter or feature space are fundamentally characterized by both magnitude and direction?
\end{tcolorbox}

\begin{figure}[t!]
  \setlength{\abovecaptionskip}{-0.1cm}
  \setlength{\belowcaptionskip}{-0.2cm}
  \vspace{-.1cm}
  \centering

    \includegraphics[width=\columnwidth]{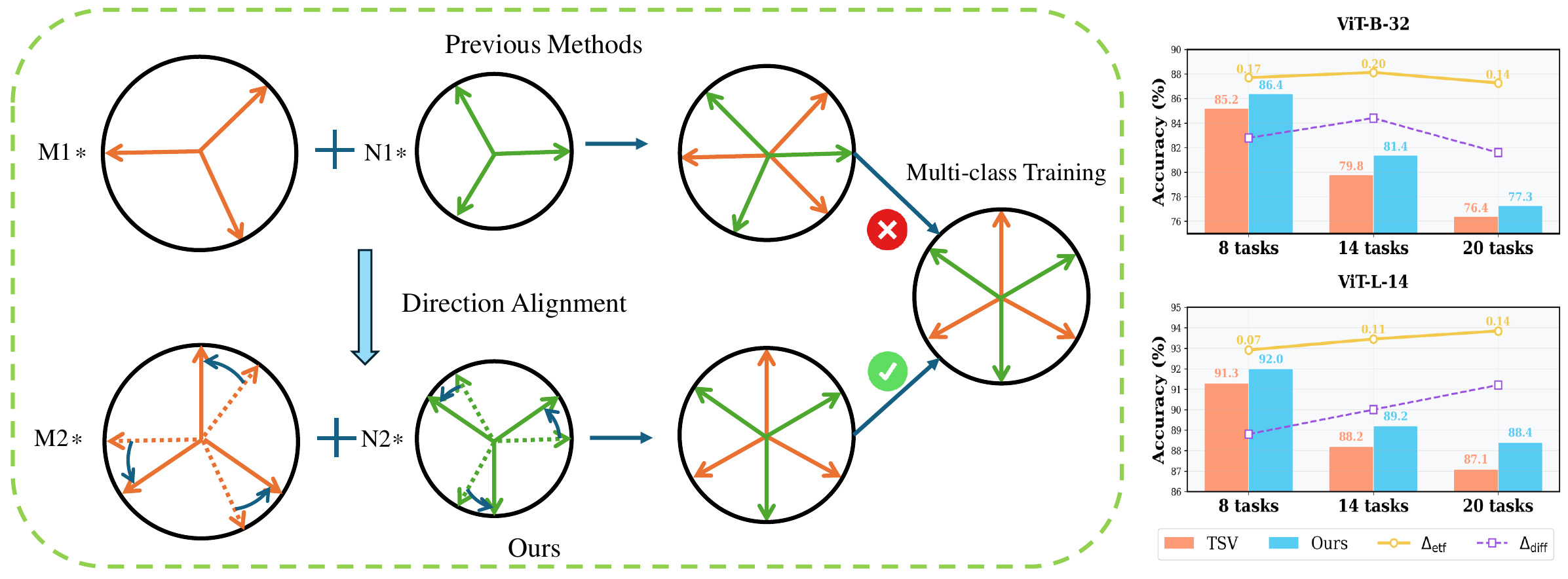}
    % \vspace{-0.4cm}
  \caption{\textbf{Left Figure:} A comparison between our method and previous methods. Our method constructs a simplex Equiangular Tight Frame (ETF) as a geometric basis to align task-specific parameter directions, 
    thereby achieving directional consistency that better approximates the behavior of multi-class joint training. $\text{M1}, \text{N1}, \text{M2}$ and $\text{N2}$ are fusion coefficients, rather than model or classifier weights. \textbf{Right Figure:} Correlation between the directional deviation $\Delta_{\text{ETF}}$ 
(from the ideal ETF geometry) and the performance gap $\Delta_{\text{diff}}$ across tasks. Compared with \emph{task singular vectors}~(TSV) \citep{gargiulo2025task}, our approach reveals a clear correlation between the directional deviation $\Delta_\text{etf}$ from the simplex ETF structure and the observed performance $\Delta_{\text{diff}}$ gap: larger deviations from the simplex ETF structure reliably indicate larger performance gaps across tasks.}
  \label{fig:overview}
  \vspace{-0.56cm}
\end{figure}
% \vspace{-0.4cm}
% \vspace{-.5cm}\vspace{-0.4cm}  

In this work, we argue the importance of direction and address this limitation by optimizing the directional components of the task vectors, both in the parameter space and in the feature space, to achieve superior merging performance. \emph{direction} is used to represent the orientation of parameter space or feature space. Following prior work \citep{ilharco2022editing}, we adopt the standard definition of \emph{task vector}, obtained by subtracting the model parameters before fine-tuning from those after fine-tuning on a given task. From an empirical perspective, a well-trained single-task model tends to encode task-specific information in a small number of dominant directions in the parameter space. When models are trained independently, these dominant directions generally do not align, and naïvely combining their parameters can lead to destructive interference. Since parameter space is finite-dimensional, misaligned task directions reduce the separability of the merged model and increase unwanted interactions across tasks. In this work, we regard the simplex \emph{Equiangular Tight Frame} (ETF) as an idealized form of maximally separable directional structure in such finite-dimensional spaces, providing a geometric reference for understanding why preserving or recovering directional consistency is beneficial.  In the feature space, we leverage the phenomenon of Neural Collapse \citep{papyan2020prevalence}, which shows that under balanced data and mild training conditions, both classifier weights and class-mean features tend to organize into structured directional templates. This provides geometric motivation for enforcing directional consistency across models in the representation space. As illustrated in Figure~\ref{fig:overview} (left), naïvely merging models trained on Task 1 (orange) and Task 2 (green) produces representations with clear directional inconsistencies when compared with those obtained from joint multi-class training. These inconsistencies highlight the need to explicitly align directional structures to achieve robust merging performance.

This perspective motivates us to ask: \textit{to what extent does the failure of model merging stem from deviations from this ideal ETF geometry?} To answer this, we measure how closely the merged space align with the ETF structure and whether such deviations correlate with performance degradation. Specifically, we quantify this deviation as:
$\Delta_\text{etf} = \Delta_\text{Ours} - \Delta_\text{TSV},  \Delta_{\text{Ours}} 
= \frac{1}{T} \sum_{t=1}^{T} \cos\!\big(W^t_{\text{Ours}}, W^t_{\text{ETF}}\big), \Delta_{\text{TSV}} 
= \frac{1}{T} \sum_{t=1}^{T} \cos\!\big(W^t_{\text{TSV}}, W^t_{\text{ETF}}\big), \cos(A, B) = \frac{\langle A, B \rangle_F}{\|A\|_F \|B\|_F}, $
where $W^t_\text{Ours}$ represents the merged parameter space using our method, $W^t_\text{TSV}$ represents the merged parameter space using the TSV method \citep{gargiulo2025task}, $W^t_{\text{ETF}}$ denotes the corresponding standard ETF structure, $\langle A, B \rangle_F = \text{Tr}(A^\top B)$ is the Frobenius inner product, $\|A\|_F = \sqrt{\text{Tr}(A^\top A)}$ is the Frobenius norm, $\Delta_{\text{Ours}}$ denotes the cosine distance between the direction-aligned parameter space and the standard ETF, while $\Delta_{\text{TSV}}$ denotes the cosine distance between the pre-alignment parameter space and the standard ETF, and $T$ denotes the total number of tasks. As shown in the right panels of Figure~\ref{fig:overview}, the performance gap $\Delta_{\text{diff}}$ between our method and the baseline TSV approach (purple dashed line) exhibits a strong correlation with $\Delta_\text{etf}$ across 8 task configurations on ViT-B-32 and ViT-L-14 architectures. These empirical observations validate our hypothesis that directional alignment is crucial for effective model merging. 

Motivated by the above analysis, we propose a novel \emph{direction alignment} framework that can operate flexibly in both parameter and feature spaces.  Concretely, in parameter space, we first apply low-rank decomposition to task-specific parameter vectors and reconstruct a shared parameter space. The reconstructed parameters are then aligned onto a pre-defined simplex ETF, ensuring that the merged model inherits the optimal geometric configuration. In feature space, we introduce a joint optimization framework that simultaneously learns fusion coefficients and task-specific rotation matrices. This process aligns the extracted features with the ETF structure, while balancing entropy minimization, alignment consistency, and rotation regularization. Together, these two components, parameter-space alignment and feature-space alignment, enable the merged model to narrow the gap to mitigating the performance degradation. Our contributions are threefold:

\begin{enumerate}
\item We highlight the critical role of directional alignment, which becomes particularly significant in model merging, encompassing both parameter and feature spaces.
\item We theoretically demonstrate the structural advantages introduced by directional alignment, showing that it promotes more coherent geometric organization in both parameter and feature spaces and yields provable generalization benefits.
\item We validate our approach through extensive experiments across diverse benchmarks and task configurations, varying in class counts, model scales, and data availability. Our results demonstrate the effectiveness of the proposed method, showing that it consistently improves performance across diverse tasks and datasets.
\end{enumerate}

\vspace{-0.3cm}
\section{Related Work}
\vspace{-0.2cm}
\textbf{Data-free model merging.} Data-Free Model Merging methods operate solely on model parameters, offering significant practical advantages by eliminating the need for any training or validation data. Task Arithmetic \citep{ilharco2022editing} introduces the fundamental concept of task vectors, the difference between fine-tuned and pre-trained weights, and merges them through simple arithmetic averaging. TIES-Merging \citep{yadav2023ties} enhances this approach by trimming low-magnitude parameters, resolving sign conflicts, and performing disjoint merging of sparse vectors. Fisher Merging \citep{matena2022merging} employs the Fisher Information matrix to prioritize important parameters during weighted averaging. More recent advances include Consensus Merging \citep{wang2024localizing}, which identifies and removes selfish parameters that benefit only specific tasks, and Adaptive Weight Disentanglement (AWD) \citep{xiong2024multi} that explicitly promotes orthogonality among task vectors to reduce interference. Task Singular Vectors (TSV) \citep{gargiulo2025task} represents a geometric approach by applying singular value decomposition to per-layer task matrices, enabling compression and interference reduction through whitening transformations. Similarly, DOGE \citep{wei2025modeling} formulates merging as a constrained optimization problem solved via adaptive projective gradient descent within a shared subspace. Most recently, Isotropic Merging (ISO) \citep{marczak2025no}  proposes to flatten the singular value spectrum of task matrices to enhance subspace alignment between task-specific and merged models, achieving state-of-the-art performance through both common and task-specific subspace integration. Although previous data-free methods have achieved notable performance improvements, most of them rely on decomposing the parameter space into different subspaces. Such partitioning breaks the global geometric coherence of the parameter space among tasks. In contrast, our approach introduces a new paradigm that performs \textbf{global parameter-space alignment} from a unified geometric perspective rather than separating the space into different components.

\textbf{Data-based model merging.} Data-Based Model Merging methods leverage additional data, typically unlabeled examples from the test distribution, to guide the merging process and achieve improved performance. AdaMerging \citep{yang2023adamerging} treats merging coefficients as learnable parameters optimized by minimizing the prediction entropy in test data. Representation Surgery \citep{yang2024representation} extends this approach by explicitly aligning the intermediate representations of the merged model with those of individual task-specific models, using test data to calibrate feature-level biases. These methods generally achieve higher accuracy by adapting to the target data distribution, but this advantage comes with significant costs: the requirement for data collection introduces practical constraints and potential privacy concerns, the additional computation during merging increases resource requirements, and the performance gains are often sensitive to the quality and representativeness of the provided data. Moreover, these approaches typically overlook the geometric implications of neural collapse in feature space, failing to preserve the optimal directional relationships between class prototypes that emerge in well-trained models.  To address this, our framework extends the same directional alignment principle into the feature space, where we not only consider optimizing the fusion coefficients but also explicitly enforcing the directional coherence among task representations.

\section{Preliminaries}

\subsection{Task Vector}
\label{task}
%%% 我们借鉴了这篇文章中对于task vector的概念，之前的方法通过任务算术的方式实现对于不同任务的任务向量进行进行的操作，从而使得融合后的任务向量能够有着多个任务的能力，同时有着更好的泛化性能。但是，他们仅仅对融合系数进行了相应的调整，而忽略了方向的重要性  However, their approach focuses solely on adjusting the merging coefficients, while neglecting the importance of the directional alignment among task vectors.  
We draw inspiration from prior work \citep{ilharco2022editing} that introduces the concept of task vectors and leverages task arithmetic to combine task-specific vectors from multiple tasks, thereby enabling the merged representation to inherit multi-task capabilities and achieve better generalization. For a given task $k$, its corresponding task vector $\boldsymbol{\tau}_k$ is computed as the vector by subtracting the pre-trained parameters $\theta_{\text{0}}$ from the fine-tuned parameters $\theta_k$:  $\boldsymbol{\tau}_k = \theta_k - \theta_{\text{0}}$.

Furthermore, multiple task vectors $\{\boldsymbol{\tau}_k\}_{k=1}^K$ are aggregated and combined with the pre-trained model through a weighted summation, yielding the multi-task model:
$
\theta_{\text{MTL}} = \theta_{\text{0}} + \lambda \sum_{k=1}^K \boldsymbol{\tau}_k,
$
where the merging coefficient $\lambda$ denotes the task-wise importance coefficient for adaptive vector fusion.

%%% 为了更精细的控制融合，我们的方法按照【1】所示，对于task vector的每一层进行了更加精细的融合，因此，公式可写成如下的形式
To achieve finer-grained control over the fusion process, our method, inspired by \cite{yang2023adamerging}, performs layer-wise adaptation of task vectors $\boldsymbol{\tau}^{(l)}_k = \theta^{(l)}_k - \theta^{(l)}_{\text{0}}$. Thus, the formulation can be generalized as follows:
$
\theta^{(l)} = \theta^{(l)}_{\text{0}} + \lambda^{(l)} \sum_{k=1}^K \boldsymbol{\tau}^{(l)}_k,
$
where the merging coefficient $\lambda^{(l)}$ denotes the layer-wise importance coefficient for adaptive vector fusion.

 %%%% 我们对参数空间进行相应的分析，这个是在data-free的条件下，仅对参数本身的性质进行优化。我们以两个任务为例，来进行相应的分析：首先，我们对两个任务的任务向量每一层进行相应的svd分解，以第l层为例，因此，我们有如下的公式
%%%% 我们保留每个任务向量分解后的主要成分的方向，因此，通过截断以及拼接得到共享的参数空间，  之后将得到的拼接后的主成分，需要得到相应的共享的任务向量

\subsection{Neural Collapse}
Neural collapse \citep{papyan2020prevalence} refers to a highly structured phenomenon that emerges in the terminal phase of training on balanced datasets (i.e., after achieving a near-zero training error). Specifically, the last-layer features and classifier weights converge towards a simplex ETF geometry, formally defined as follows:

\begin{definition}[Simplex Equiangular Tight Frame]
A simplex ETF is a matrix $W_\text{ETF} = [w_1, w_2, \ldots, w_C] \in \mathbb{R}^{d\times C}$ composed of $C$ vectors and $w_i \in \mathbb{R}^d$ with $d \geq C-1$, satisfying:
\begin{equation}
W_\text{ETF}=\sqrt{\frac{C}{C-1}} \mathbf{U}\left(\mathbf{I}_C-\frac{1}{C} \mathbf{1}_C \mathbf{1}_C^\top\right),
\end{equation}
where $\mathbf{U} \in \mathbb{R}^{d\times C}$ satisfies $\mathbf{U}^\top \mathbf{U}=\mathbf{I}_C$. For any $w_{i}, w_{j} \in W_\text{ETF}$, we have
\begin{equation}
w_{i}^\top w_{j}=\frac{C}{C-1} \delta_{i, j}-\frac{1}{C-1}, \quad \forall i, j \in[1, C],
\end{equation}
with $\delta_{i, j}=1$ if $i=j$ and $0$ otherwise.
\label{def1}
\end{definition}

Building on this definition, neural collapse can be characterized by four convergent properties:

\textbf{NC1} (Variability collapse). Intra-class feature variability vanishes, and features from the same class collapse to their class mean: $\Sigma_W^c = \frac{1}{n_c} \sum_{i=1}^{n_c}\left(\mathbf{h}_{c,i}-\mathbf{h}_c\right)\left(\mathbf{h}_{c,i}-\mathbf{h}_c\right)^T \to 0,$
where $n_c$ is the number of samples in class $c$, $\mathbf{h}_{c,i}$ is the feature of the $i$-th sample, and $\mathbf{h}_c$ denotes the class mean of class $c$.

\textbf{NC2} (Convergence to simplex ETF). Class means become maximally separated, forming a centered simplex ETF: $\mathbf{\hat{h}}_c = \frac{\mathbf{h}_c - \mathbf{h}_G}{\|\mathbf{h}_c - \mathbf{h}_G\|},
\mathbf{h}_G = \frac{1}{\sum_k n_c} \sum_{k=1}^K \sum_{i=1}^{n_c} \mathbf{h}_{c,i}.$

\textbf{NC3} (Self-duality with classifier weights). The normalized class means align with the corresponding classifier weights: $\mathbf{\hat{h}}_c = \frac{w_c}{\|w_c\|},$
where $w_c$ denotes the classifier weight vector for class $c$.

\textbf{NC4} (Nearest class center decision rule). The classifier reduces to a nearest-class-mean classifier: $\arg\max_c \langle \mathbf{h}, w_c \rangle 
= \arg\min_c \|\mathbf{h} - \mathbf{h}_c\|,$
where $\mathbf{h}$ is the feature of a test sample.

Although the standard Neural Collapse analysis primarily focuses on the final classifier layer, subsequent empirical and theoretical studies have shown that ETF-like organization can progressively emerge throughout intermediate layers~\citep{parker2023neural}. This suggests that representations at each layer tend to form locally collapsed, equiangular structures within their effective feature subspaces.
In parallel, the Neural Feature Ansatz (NFA)~\citep{beaglehole2024feature} provides a complementary dynamical perspective, showing that during training, the dominant singular directions of weight matrices co-evolve with the feature covariance of the corresponding layer. This coupling implies that both weights and features become jointly aligned toward directions that promote strong inter-class separation, a property consistent with ETF geometry.
Taken together, these insights motivate our working assumption that the column space of each layer’s weight matrix can be approximated by a near-ETF subspace. This perspective captures the intrinsic co-alignment dynamics between weights and features observed across depth, provides a tractable geometric foundation for our layer-wise regularization design, and offers a theoretical explanation for the empirical correlation between the directional deviation $\Delta_{\text{ETF}}$ and the performance gap $\Delta_{\text{diff}}$ observed in Figure~\ref{fig:overview}. We emphasize that our subsequent derivations rely on this assumption, whose plausibility is supported by both theoretical considerations and empirical evidence.

\section{Merging with Directional Alignment}

Most prior works either partition the parameter space into different subspaces~\citep{gargiulo2025task, wei2025modeling, marczak2025no},  
or formulate model merging as an optimization problem over task-specific fusion coefficients~\citep{yang2023adamerging}.
In contrast, our method provides a unified geometric perspective that views model merging through the lens of directional alignment.  
In the parameter space, we align the shared subspace with the simplex ETF directions to ensure globally consistent task representations.  
In the feature space, we leverage the ETF structure naturally formed between the final-layer features and classifier weights, and jointly optimize the fusion coefficients and rotation matrices to achieve feature-level directional consistency.
By aligning task updates with the simplex ETF geometry, we bridge parameter-space and feature-space alignment within a single framework, and further provide a theoretical analysis.

% Most prior works \citep{gargiulo2025task, wei2025modeling, marczak2025no} have focused on partitioning the parameter or representation space into shared and task-specific subspaces. However, such approaches are often computationally expensive and make it difficult to decouple the corresponding space. In contrast, multi-class joint training naturally induces a more balanced partition of the space. Motivated by this, we propose aligning the shared space with simplex ETF directions, which yields strong performance. Accordingly, we align and optimize both the parameter and feature spaces, and provide a theoretical analysis of the performance gains brought by directional alignment.

%%%% 受神经坍塌现象的启发，我们认为对于张量而言，除了大小之外，方向对于model merging而言也至关重要，因此，我们分别从参数空间和特征空间分别进行了研究，并给出了方向对齐带来的增益的结论

\subsection{Parameter-space Directional Alignment}
We conduct parameter-space analysis under data-free conditions, focusing on optimizing the intrinsic geometric properties of multi-task parameters. Although the subsequent theoretical analysis could, in principle, apply to any $\tau_{\text{share}}$ satisfying  
$\tau_{\text{etf}} = \tau_{\text{share}} P_{\text{ETF}}$,  
we specifically adopt the SVD-based construction for consistency with our overall framework and in line with prior works \citep{gargiulo2025task, marczak2025no, wei2025modeling}.  
This choice not only ensures comparability with existing model-merging approaches but also provides a statistically optimal low-rank approximation that captures the dominant shared components across tasks.
\begin{figure}[t]
\centering
\begin{minipage}[t]{0.46\textwidth}
\begin{algorithm}[H]
\caption{Parameter-Space Alignment}
\label{alg:1}
\begin{algorithmic}[1]
\STATE {\bf Input:} Task vectors $\{\tau_t\}_{t=1}^T$, rank $k$
\STATE {\bf Output:} ETF-guided task vectors $\tau_{\text{etf}} = \{\tau^\text{etf}_t\}_{t=1}^T$
\FOR{$t=1$ to $T$}
     \STATE SVD: $\tau_t^{(l)}$: $\tau_t^{(l)} = U_t^{(l)} \Sigma_t^{(l)} (V_t^{(l)})^\top$
    \STATE Keep top-$k$ components: $U_t^{(l)}[:, 1:k]$
\ENDFOR
\STATE Concatenate $\{U_t^{(l,k)}\}_{t=1}^T$ to form $U_{\text{cat}}^{(l)}$
\STATE $U_{\text{cat}}^{(l)} = U_{\text{share}}^{(l)} \Sigma_{\text{share}}^{(l)} (V_{\text{share}}^{(l)})^\top$
\STATE Truncate $U_{\text{share}}^{(l)}$ to its first $d_{\text{out}}$ columns to obtain $\tau_{\text{share}}^{(l)}$
\STATE Build ETF matrix $W_\text{ETF}$ according to Appendix \ref{etf}:

$ (W_\text{ETF} W_\text{ETF}^\top)_{ii} = ||w_i||^2 = 1$

$ (W_\text{ETF} W_\text{ETF}^\top)_{ij} = w_i w_j^\top = -\frac{1}{C-1}, i \neq j$
\STATE $\tau_{\text{etf}} \gets \tau_{\text{share}} W_\text{ETF}^\top W_\text{ETF}$

\end{algorithmic}
\end{algorithm}
\end{minipage}
\hfill
\begin{minipage}[t]{0.52\textwidth}
\begin{algorithm}[H]
\caption{Feature-Space Alignment}
\label{alg:2}
\begin{algorithmic}[1]
\STATE \textbf{Input:} ETF-guided task vectors $\tau_{\text{etf}} = \{\tau^\text{etf}_t\}_{t=1}^T$, the total number of layers in the model is $L$, datasets $\{\mathcal{D}_t\}_{t=1}^T$, epochs $E$
\STATE \textbf{Output:} Fusion coefficients $\boldsymbol{\lambda}$, rotation matrices $\{\mathbf{R}^t\}_{t=1}^T$
\STATE Initialize $\boldsymbol{\lambda _{0}}=\tfrac{1}{T}\mathbf{1}_T$, $\mathbf{R}^t=\mathbf{I}_d$; construct ETF $M^*$

// Layer-wise Fusion
\FOR{$l=1$ to $L$}
    \STATE Fusion: $\theta ^{(l)} = \theta_0 ^{(l)} + \lambda^{(l)}\sum_{t=1}^T  {\tau}_t^{\text{etf}, (l)} $
\ENDFOR

\FOR{$e=1$ to $E$}
    
    \STATE Extract rotated features: $\tilde{\mathbf{h}}_{t,i}=\mathbf{R}^t\phi_{\boldsymbol{\theta}}(\mathbf{x}_{t,i})$
    \STATE Compute $\mathcal{L}=\mathcal{L}_{\text{entropy}}+\alpha\mathcal{L}_{\text{align}}+\beta\mathcal{L}_{\text{rotation}}$
    \STATE Update $\lambda^{(l)}$ via $\mathcal{L}_{\text{entropy}}$ 
    % \IF{$e \bmod 10=0$}
    \STATE Update $\textbf{R}^t$ via $\mathcal{L}_{\text{align}}$ and $\mathcal{L}_{\text{rotation}}$
\ENDFOR
\end{algorithmic}
\end{algorithm}
\end{minipage}
\end{figure}

For tasks $\{T_1, T_2, \ldots, T_T\}$ and layers $l=1,\dots,L$, we first obtain their SVD decompositions:
\[
\tau^{(l)}_t = U^{(l)}_t \Sigma^{(l)}_t (V^{(l)}_t)^\top.
\]
We retain the top-$k$ principal components $U^{(l,k)}_t := U^{(l)}_t[:, 1:k]$ and construct a shared representation
\[
\tau^{(l)}_{\text{share}} \;:=\; U^{(l)}_\text{share}[:,1\!:\!d_\text{out}] \;\in\; \mathbb{R}^{d_\text{in} \times d_\text{out}},
\]
where $U^{(l)}_\text{share}$ is obtained from the SVD of the concatenated basis
\[
U^{(l)}_{\text{cat}} = \big[\,U^{(l,k)}_1, U^{(l,k)}_2, \cdots, U^{(l,k)}_T\,\big] \in \mathbb{R}^{d_\text{in}\times kT}, 
\qquad 
U^{(l)}_{\text{cat}} = U^{(l)}_\text{share} \Sigma^{(l)}_\text{share} (V^{(l)}_\text{share})^\top.
\]
Here $d_\text{in}$ is the parameter input dimension at layer $l$, and $d_\text{out}$ is the parameter output dimension. We set $k = d_{\text{out}} / T$ so that the concatenated dimension $kT$ exactly matches the output dimension $d_{\text{out}}$, ensuring a balanced representation across all $T$ tasks.
The proposed method is illustrated through Algorithm \ref{alg:1} and Figure \ref{fig:overview11} in Appendix \ref{methods}.

The ETF-aligned task parameter space at each layer is obtained through the following direction alignment:
\begin{equation}
\tau^{(l)}_{\text{etf}} = \tau^{(l)}_{\text{share}} W_{\text{ETF}}^\top W_{\text{ETF}} 
= \tau^{(l)}_{\text{share}} \,\mathcal{P}_{\text{ETF}}
\end{equation}
where $W_{\text{ETF}} \in \mathbb{R}^{C \times d_\text{out}}$ denotes an ETF matrix, $C = \sum_t C_t$ classes, each task $t$ corresponds to a subset of $C_t$ classes, $\mathcal{P}_{\text{ETF}} = W_{\text{ETF}}^\top W_{\text{ETF}}$ is the direction alignment matrix, ETF-guided task vectors $\tau_{\text{etf}} = \{\tau^\text{etf}_t\}_{t=1}^T$ are obtained by projecting the shared parameter subspace $\tau_{\text{share}}$ 
onto the ETF basis.

\subsection{Feature-space Directional Alignment}

We assign a trainable direction alignment matrix to the final layer of each task and jointly optimize the fusion coefficients and direction matrices using an ETF alignment loss, enhancing generalization and mitigating feature overlap and inter-task interference. In our framework, Algorithm \ref{alg:1} and Algorithm \ref{alg:2} are used sequentially.  
We first obtain the ETF-aligned task vectors $\tau_{\text{etf}} = \{\tau_t^{\text{etf}, (l)}\}_{t=1}^T$ through Algorithm~\ref{alg:1}.  
Then, these aligned vectors are used to initialize the model parameters in Algorithm~\ref{alg:2} as $\theta^{(l)} = \theta_0^{(l)} + \lambda^{(l)} \sum_{t=1}^T \tau_t^{\text{etf}, (l)}.$ On this basis, Algorithm~\ref{alg:2} further refines the model by jointly optimizing the fusion coefficients $\lambda^{(l)}$  
and the rotation matrices $\{\mathbf{R}^t\}_{t=1}^T$ in the feature space. Based on the above analysis, we design the following loss function to jointly optimize both the fusion coefficients and the directional alignment matrix.

\begin{definition}[Task Vector Fusion with Feature Direction Alignment]
\label{def:fusion_rotation}
The fusion model combines task vectors in parameter space and applies direction alignment in feature space:
\begin{align}
\theta ^{(l)} &= \theta_0 ^{(l)} + \lambda^{(l)}\sum_{t=1}^T  \boldsymbol{\tau}_t^{(l)}, \label{eq:param_fusion} \\
\tilde{\mathbf{h}}_{t,i} &= \mathbf{R}^t \mathbf{h}_{t,i}, \quad \mathbf{R}^t \in SO(d) \label{eq:feature_rotation},
\end{align}
where $\lambda^{(l)}$ is layer-wise trainable fusion coefficient, $\{\mathbf{R}^t\}_{t=1}^T$ are task-specific rotation matrices, and $SO(d)$ denotes the group of all $d\times d$ pure rotation matrices. In the equation, constraining the transformation to $SO(d)$ ensures that the feature rotations preserve the vector norms and angles.
\end{definition}

Our approach formulates model merging as a joint optimization problem that simultaneously learns task vector fusion coefficients in parameter space and rotation matrices in feature space. The complete objective function combines three complementary loss terms:
\vspace{-.2cm}
\begin{equation}
\min_{\{ \lambda^{(l)}\}_{l=1}^L, \{\mathbf{R}^t\}_{t=1}^T} 
\mathcal{L}(\{\lambda^{(l)}\}, \{\mathbf{R}^t\}) 
= \mathcal{L}_{\text{entropy}} 
+ \alpha \, \mathcal{L}_{\text{align}} 
+ \beta \, \mathcal{L}_{\text{rotation}}.
\label{eq:full_objective}
\end{equation}
\vspace{-.4cm}

% where $\boldsymbol{\lambda} = (\lambda_1, \ldots, \lambda_T) \in \Delta^{T-1}$ are the task fusion coefficients, and $\mathbf{R}_t \in SO(d)$ denotes a task-specific rotation matrix.

The first item is the entropy minimization loss $\mathcal{L}_{\text{entropy}}$, which aims to improve the calibration across the merged tasks \citep{yang2023adamerging} as
% , and we incorporate the softmax entropy regularization :
\vspace{-.4cm}
\begin{equation}
\mathcal{L}_{\text{entropy}} = - \sum_{i=1}^{C} \sigma(\mathbf{x})_i \, \log \sigma(\mathbf{x})_i,
%\quad \text{where } \sigma(\mathbf{x}) = \text{Softmax}(\mathbf{x}),
\end{equation}
%\vspace{-.4cm}
where $\sigma(\mathbf{x})$ denotes the Softmax output distribution for $\mathbf{x}$, and $\sigma(\mathbf{x})_i$ indicates the probability corresponding to the $i$-th class.

% where $\sigma(\mathbf{x})$ denotes the classification probability distribution of $\mathbf{x}$ and $\sigma(\mathbf{x})_i$ denotes the $i$-th entry in $\sigma(\mathbf{x})$.

%normalizes $\mathbf{x}$ into a probability distribution over $C$ classes. 

% \vspace{-0.4cm}

The second term is the neural collapse assignment loss $\mathcal{L}_{\text{align}} $, which enforces the structure of the ETF. We align normalized rotated features with their ETF targets as
\begin{equation}
\mathcal{L}_{\text{align}} 
= \sum_{t=1}^T \frac{1}{n_t} \sum_{i=1}^{n_t} 
\left\| \frac{\tilde{\mathbf{h}}_{t,i}}{\|\tilde{\mathbf{h}}_{t,i}\|_2} 
- m_{\phi_t(y_{t,i})}^* \right\|_2^2,
\label{eq:align_loss}
\end{equation}
where $\tilde{\mathbf{h}}_{t,i} = \mathbf{R}^t \mathbf{h}_{t,i}$, $m_{\phi_t(y_{t,i})}^*$ is the ETF vector corresponding to class $y_{t,i}$ under the feature extraction layer of the neural network $\phi_t: \mathcal{Y}_t \to \{1, \ldots, C\}$.

% The last item is feature-space direction alignment regularization $\mathcal{L}_{\text{rotation}} $.  
% Each $W^t_\text{ETF}$ is encouraged toward the optimal Procrustes rotation computed from empirical class means and ETF targets:
% \vspace{-.2cm}
% \begin{equation}
% \mathcal{L}_{\text{rotation}} 
% = \sum_{t=1}^T \left\| W^t_\text{ETF} - W^t_{\text{proc}} \right\|_F^2, W^t_{\text{proc}} = 
% \argmin_{W_\text{ETF} \in SO(d)} 
% \left\|  W_\text{ETF}M^t - M^{t*} \right\|_F^2,
% \label{eq:rotation_loss}
% \end{equation}
% where $M^t \in \mathbb{R}^{d \times C_t}$ contains the empirical class means:
% $[M^t]_{:,c} = \frac{1}{|\{i: y_{t,i} = c\}|} \sum_{i: y_{t,i} = c} \mathbf{h}_{t,i}$
% and $M^{t*} \in \mathbb{R}^{d \times C_t}$ containing the corresponding ETF targets.

The last item is the feature-space direction alignment regularization $\mathcal{L}_{\text{rotation}}$, which encourages each task-specific rotation matrix $\mathbf{R}^t$ to align with the optimal Procrustes rotation computed between the empirical class means and the ETF targets:
\vspace{-0.2cm}
\begin{equation}
\mathcal{L}_{\text{rotation}} 
= \sum_{t=1}^T \left\| \mathbf{R}^t - \mathbf{R}_{\text{proc}}^{\,t} \right\|_F^2, 
\qquad 
\mathbf{R}_{\text{proc}}^{\,t} 
= \argmin_{\mathbf{R} \in SO(d)} 
\left\| M^t \mathbf{R} - M^{t*} \right\|_F^2,
\label{eq:rotation_loss}
\end{equation}
where $M^t \in \mathbb{R}^{C_t \times d}$ contains the empirical class means:
$[M^t]_{c,:} = \frac{1}{|\{i: y_{t,i} = c\}|} 
\sum_{i:\, y_{t,i}=c} \frac{\mathbf{h}_{t,i}}{\|\mathbf{h}_{t,i}\|_2}$,
and $M^{t*} \in \mathbb{R}^{C_t \times d}$ contains the corresponding ETF targets.
Here, $\mathbf{R}^t \in SO(d)$ denotes the learnable rotation matrix for task $t$, 
which aligns its feature representations with the global feature space. 
In the Procrustes objective above, 
the optimization variable $\mathbf{R} \in SO(d)$ represents a candidate rotation matrix 
in the $d$-dimensional feature space 
($\mathbf{R}^\top \mathbf{R} = \mathbf{I}$ and $\det(\mathbf{R}) = 1$),
and the resulting $\mathbf{R}_{\text{proc}}^{\,t}$ is the optimal rotation 
that best aligns the empirical class means $M^t$ with the ETF targets $M^{t*}$ 
in the least-squares sense. The closed-form solution of $\mathbf{R}_{\text{proc}}^{\,t}$ follows from the orthogonal Procrustes formulation.
$H^t = (M^t)^\top M^{t*} = U^t \Sigma^t (V^t)^\top$ is the singular value decomposition of $H^t$, the optimal rotation is given by 
$\mathbf{R}_{\text{proc}}^{\,t} = U^t V^{t\top}$.
% and, if necessary, corrected by $\det(U^t V^{t\top})$ to ensure it lies in the special orthogonal group $SO(d)$ 
% ($\det(\mathbf{R}_{\text{proc}}^{\,t}) = 1$):$\mathbf{R}_{\text{proc}}^{\,t} 
% = U^t \operatorname{diag}\!\bigl(1,\dots,1,\;\det(U^t V^{t\top})\bigr) V^{t\top}.$}

\section{Experiments}
\vspace{-.2cm}

To validate the effectiveness of our approach, we conduct experiments on both image classification and natural language understanding tasks. To ensure a fair comparison, we follow the settings of prior work \citep{gargiulo2025task, wei2025modeling, marczak2025no} in terms of dataset and model selection. We provide more details in Appendix \ref{perf}.
\subsection{Settings}
\label{settings}
\vspace{-.1cm}

\textbf{Datasets and Models.} For vision tasks, we evaluate our approaches over three benchmark suites comprising 8, 14, and 20 tasks, respectively. More details are provided in Appendix \ref{Datasets}. To investigate the effect of model capacity, we evaluate our method using three CLIP~\citep{radford2021clip} variants, each employing a different ViT~\citep{dosovitskiy2020image} visual encoder: ViT-B/32, ViT-B/16, and ViT-L/14. For Natural Language Process (NLP) tasks, we evaluate our method using Flan-T5-base ~\citep{chung2024flant5} on eight representative datasets from the GLUE benchmark~\citep{wang2019glue} in Table \ref{tab:multi_task_performance4} in Appendix \ref{results}.

\textbf{Baselines.} We conduct a comprehensive comparison with existing methods, covering both data-free and data-based optimization approaches. For data-free methods, we evaluate Task Arithmetic (TA) \citep{ilharco2022editing}, Concrete TA \citep{tang2023concrete}, Ties-Merging \citep{yadav2023ties}, Consensus Merging \citep{wang2024localizing}, AWD TA \citep{xiong2024multi}, PCB-Merging \citep{du2024parameter}, TSV \citep{gargiulo2025task}, ISO \citep{marczak2025no}, and DOGE TA \citep{wei2025modeling}. For data-based optimization methods, we consider AdaMerging \citep{yang2023adamerging}, Concrete AM \citep{tang2023concrete}, Representation Surgery \citep{yang2024representation}, AWD AM \citep{xiong2024multi}, and DOGE AM \citep{wei2025modeling}. We denote the TA (Task Arithmetic) methods as data-free, since they perform model fusion solely in the parameter space without accessing any training data. In contrast, the AM (Adaptive Merging) methods are data-based, as they leverage available data to optimize the corresponding fusion coefficients during the merging process. MDA TA is derived from Algorithm \ref{alg:1}, while MDA AM initializes with the ETF-aligned parameters obtained from Algorithm \ref{alg:1} and further optimizes them through Algorithm \ref{alg:2}.
\vspace{-.2cm}

% 对于验证参数方向重要性的场景，我们将去干扰后的参数空间，通过与预先设定的ETF空间进行对齐，从而实现参数空间的整体对齐；对于验证特征空间重要性的场景，根据神经坍塌理论，我们对特征提取层最后一层，对齐到一个预先设定好的ETF上，要求d=C-1，同时，分类器原型也要对齐到相同的单纯性上，同时优化特征旋转矩阵以及融合系数

\begin{table}[h!]
\vspace{-.4cm}
\centering
\caption{Average accuracy (\%) when merging models across a larger number of tasks.}
\label{tab:large_scale_merging}
\resizebox{\linewidth}{!}{
\begin{tabular}{lccccccccc}
\hline
\multirow{2}{*}{Method} & \multicolumn{3}{c}{ViT-B/32} &\multicolumn{3}{c}{ViT-B/16} & \multicolumn{3}{c}{ViT-L/14} \\
\cmidrule(lr){2-4} \cmidrule(lr){5-7} \cmidrule(lr){8-10}
% \cline{2-7}
 & 8 tasks & 14 tasks & 20 tasks & 8 tasks & 14 tasks & 20 tasks & 8 tasks & 14 tasks & 20 tasks \\ 
\hline
Pre-trained & 48.4 & 57.3 & 56.1 &  55.1& 61.3& 59.8   & 64.4 & 68.0 & 65.1 \\
Weight averaging & 66.5 & 64.4 & 61.1 & 72.1 & 69.3 & 93.1    & 79.4 & 76.6 & 71.5 \\
Task Arithmetic & 70.8 & 65.4 & 60.6 & 75.4 &70.6 & 65.9     & 84.8 & 79.3 & 74.0 \\
Ties-Merging & 75.1 & 68.0 & 63.4 & 79.2 & 73.3&  68.2   & 86.9 & 79.5 & 75.7 \\
Consensus TA & 75.0 & 70.4 & 65.4 & 79.0 & 74.5 & 69.7     & 86.2 & 82.2 & 78.9 \\
Consensus TIES & 74.8 & 67.7 & 63.2 & 78.2 & 75.3 &  67.1   & 86.9 & 81.5 & 76.8 \\
TSV TA & 85.2 & 79.8 & 76.4  & 89.0  & 84.5 &  80.5  & 91.3 & 88.2 & 87.1 \\
ISO TA & 82.9 & 78.9  & 73.1 & 89.1 & 84.6 &  79.5  & 91.1 &88.0  & 87.0 \\
ISO-CLS TA & 80.8 & 79.7  & 76.6 & 88.5 & \textbf{85.8} &  \textbf{82.8}  & 90.7 &89.0  & \textbf{88.9} \\
DOGE TA & 80.7 & 77.9& 72.5 & 86.3  & 82.1 &  79.1   & 88.8 & 87.1 & 81.0  \\
MDA TA & \textbf{86.4} & \textbf{81.4} & \textbf{77.3}& \textbf{89.9} & \textbf{85.8} & 82.7   & \textbf{92.0} & \textbf{89.2} & 88.4 \\
% \czk{MDA Low-Rank} & \czk{79.7} & \czk{74.0}  & \czk{68.1} & \czk{85.6} & \czk{79.8} &  \czk{75.6}  & \czk{90.6} &\czk{87.1}  & \czk{85.8} \\
% \czk{MDA Orthogonal} & \czk{84.7} & \czk{79.7}  & \czk{75.0} & \czk{88.6} & \czk{84.2} &  \czk{80.5}  & \czk{91.2} &\czk{88.2}  & \czk{87.0} \\
\hline
\end{tabular}}
% \vspace{-.4cm}
\end{table}

\begin{table}[htbp]
\vspace{-.3cm}
\centering
\caption{Generalization results on two unseen tasks when merging ViT-B/32 models on six tasks.}
\label{tab:generalization}
\resizebox{\linewidth}{!}{
\begin{tabular}{lcccccccccc}
\toprule
Method & \multicolumn{7}{c}{Seen Tasks} & \multicolumn{3}{c}{Unseen Tasks} \\
\cmidrule(lr){2-8} \cmidrule(lr){9-11}
 & SUN397 & Cars & RESISC45 & DTD & SVHN & GTSRB & Avg. & MNIST & EuroSAT & Avg. \\
\midrule
Pre-trained & 63.2 & 59.9 & 60.6 & 43.9 & 23.5 & 30.4 & 46.9 & 47.6 & 45.6 & 46.6 \\
Task Arithmetic & 64.3 & 63.0 & 73.2 & 54.9 & 84.7 & 79.5 & 69.9 & 75.5 & 42.6 & 59.1 \\
Ties-Merging & 68.3 & 65.5 & 76.9 & 54.9 & 75.4 & 72.0 & 68.9 & 73.1 & 47.3 & 60.2 \\
AdaMerging & 68.4 & 71.9 & 87.9 & 69.1 & 92.2 & 93.8 & 80.5 & 77.7 & 47.3 & 62.5 \\
TSV TA & 68.2 & 71.2 & 90.6 & 90.0 & \textbf{95.8} & 96.7  & 85.4 & 85.3& 42.9& 64.1 \\  
ISO TA & 72.4 & \textbf{74.2} & 89.8& 87.1  & 83.7  & 90.8 & 83.0 & 79.9& 51.2& 64.1 \\ ISO-CLS TA & \textbf{72.6} & 74.1  & 88.5 & 85.3 & 80.0 &  87.5 & 81.3 &78.2  & \textbf{52.3} &65.3 \\ 
DOGE TA &69.8 & 72.6 & 86.6 & 67.6 & 90.8 & 91.6 & 79.8 & 81.3 & 48.2 & 64.8 \\ \hline
MDA TA &  71.1 & 72.9 &  \textbf{92.4} & \textbf{91.5} & \textbf{95.8} & \textbf{97.0} & \textbf{86.8}  & \textbf{85.5} & 47.7   & \textbf{66.6} \\ \hline
MDA AM wo rotation &  \textbf{71.5} & \textbf{73.2} &  93.1 & 92.9 & 95.8 & 96.9 & 87.2  & 85.5 & 48.2   & 66.9 \\
MDA AM &  71.2 & 72.9 &  \textbf{93.3} & \textbf{94.3} & \textbf{96.1} & \textbf{97.3} & \textbf{87.5}  & \textbf{86.1} & \textbf{48.8}   & \textbf{67.5} \\

\bottomrule
\end{tabular}}
\vspace{-.4cm}
\end{table}

\subsection{Discussion on Parameter-space Alignment}
\vspace{-.2cm}
Table \ref{tab:large_scale_merging} provides compelling evidence that our parameter-space alignment approach delivers substantial directional benefits, which become increasingly pronounced as the number of tasks grows. The improvement in the accuracy of MDA TA over the TSV TA increases from approximately 0.9\% at 8 tasks to 2.2\% at 20 tasks in ViT-B/16 and from 0.7\% to 1.3\% in ViT-L/14, indicating that the structural benefits of alignment of the parameter space are amplified as the complexity of the task increases. By enforcing separation in the parameter space, our method ensures that individual task adaptations occupy orthogonal or minimally overlapping subspaces, preventing catastrophic interference. The resulting geometric separation not only preserves task-specific functionality, but also enables seamless multi-task integration.

% Overall, our approach shifts the perspective from treating merged parameters as simple arithmetic combinations to viewing them as orchestrated geometric arrangements that maintain robustness and scalability in large-scale multi-task scenarios.
% When comparing Ours TA to TSV TA across ViT-B/32, ViT-B/16, and ViT-L/14 backbones, the performance gap consistently favors our method and tends to increase with more tasks. 

% TSV AM & \textbf{85.8} & \textbf{81.9} & \textbf{76.8}& \textbf{90.5} & \textbf{85.8} & \textbf{81.4}   & \textbf{92.0} & \textbf{89.2} & \textbf{88.4} \\
% Ours AM wo rotation & \textbf{87.3} & \textbf{82.8} & \textbf{78.9}& \textbf{90.1} & \textbf{86.5} & \textbf{83.1}   & \textbf{92.0} & \textbf{89.2} & \textbf{88.4} \\
% Ours AM & \textbf{87.8} & \textbf{83.4} & \textbf{79.5}& \textbf{91.3} & \textbf{87.3} & \textbf{84.7}   & \textbf{92.0} & \textbf{89.2} & \textbf{88.4} \\

\subsection{Discussion on Feature-space Alignment}
\vspace{-.2cm}

% Notably, the gains are not limited to the overall average: Ours AM consistently improves individual task performance, with particularly strong enhancements in datasets such as EuroSAT, SVHN, MNIST, and DTD. This pattern indicates that aligning the intermediate feature representations before determining task-specific fusion weights effectively reduces destructive interference among tasks and preserves task-specific discriminative structures. By explicitly enforcing geometric alignment in the feature space, our approach ensures that the contribution of each task is optimally weighted, leading to more robust and coherent merged representations.

As illustrated in Figure \ref{fig:com}, incorporating feature direction alignment consistently leads to improved performance. These observations highlight that feature-space alignment acts as a critical structural prior, guiding the fusion process beyond naive parameter averaging or conventional optimization. The resulting improvements demonstrate that carefully orchestrated geometric arrangements in the feature space can substantially enhance multi-task model merging, particularly in scenarios with heterogeneous and high-dimensional tasks. We also conducted a detailed performance comparison in Table \ref{tab:multi_task_performance} and \ref{tab:multi_task_performance2}, which are provided in Appendix \ref{results}.

% \begin{table}[t]
% \centering
% \caption{Average accuracy (\%) when merging models across a larger number of tasks.}
% \label{tab:large_scale_merging}
% \resizebox{\linewidth}{!}{
% \begin{tabular}{lcccccc}
% \hline
% \multirow{2}{*}{Method} & \multicolumn{3}{c}{ViT-B/32} &\multicolumn{3}{c}{ViT-B/16}  \\
% \cmidrule(lr){2-4} \cmidrule(lr){5-7} 
% % \cline{2-7}
%  & 8 tasks & 14 tasks & 20 tasks & 8 tasks & 14 tasks & 20 tasks  \\ 
% \hline

% TSV AM & \textbf{85.8} & \textbf{81.9} & \textbf{76.8}& \textbf{90.5} & \textbf{85.8} & \textbf{81.4}    \\
% Ours AM wo rotation & \textbf{87.3} & \textbf{82.8} & \textbf{78.9}& \textbf{90.1} & \textbf{86.5} & \textbf{83.1}    \\
% Ours AM & \textbf{87.8} & \textbf{83.4} & \textbf{79.5}& \textbf{91.3} & \textbf{87.3} & \textbf{84.7} \\

% \hline
% \end{tabular}}
% \end{table}

\subsection{Discussion on Generalization}
\vspace{-.2cm}

Table~\ref{tab:generalization} reports the generalization performance of various merging methods on two unseen tasks (MNIST and EuroSAT) after merging ViT-B/32 models trained on six seen tasks. Our parameter-space alignment approach, MDA TA, achieves the highest average accuracy of 66.6\% on unseen tasks, surpassing all baselines, including TSV TA (64.1\%) and DOGE TA (64.8\%). In particular, our TA not only maintains strong performance on seen tasks (86.8\% average), but also exhibits more balanced accuracy across both seen and unseen tasks. Similarly, we conduct generalization tests in the feature space and found that combining directional alignment with optimization of the fusion coefficient consistently improves the generalization capability of the model. From a theoretical perspective, this enhanced generalization can be attributed to the geometric properties induced by the directional alignment. The merged model could capture more task-invariant structures, enabling effective knowledge transfer to novel tasks that are not observed during training.

\begin{figure}[t!]
  \setlength{\abovecaptionskip}{-0.1cm}
  \setlength{\belowcaptionskip}{-0.2cm}
  \vspace{-.1cm}
  \centering

    \includegraphics[width=0.8\columnwidth]{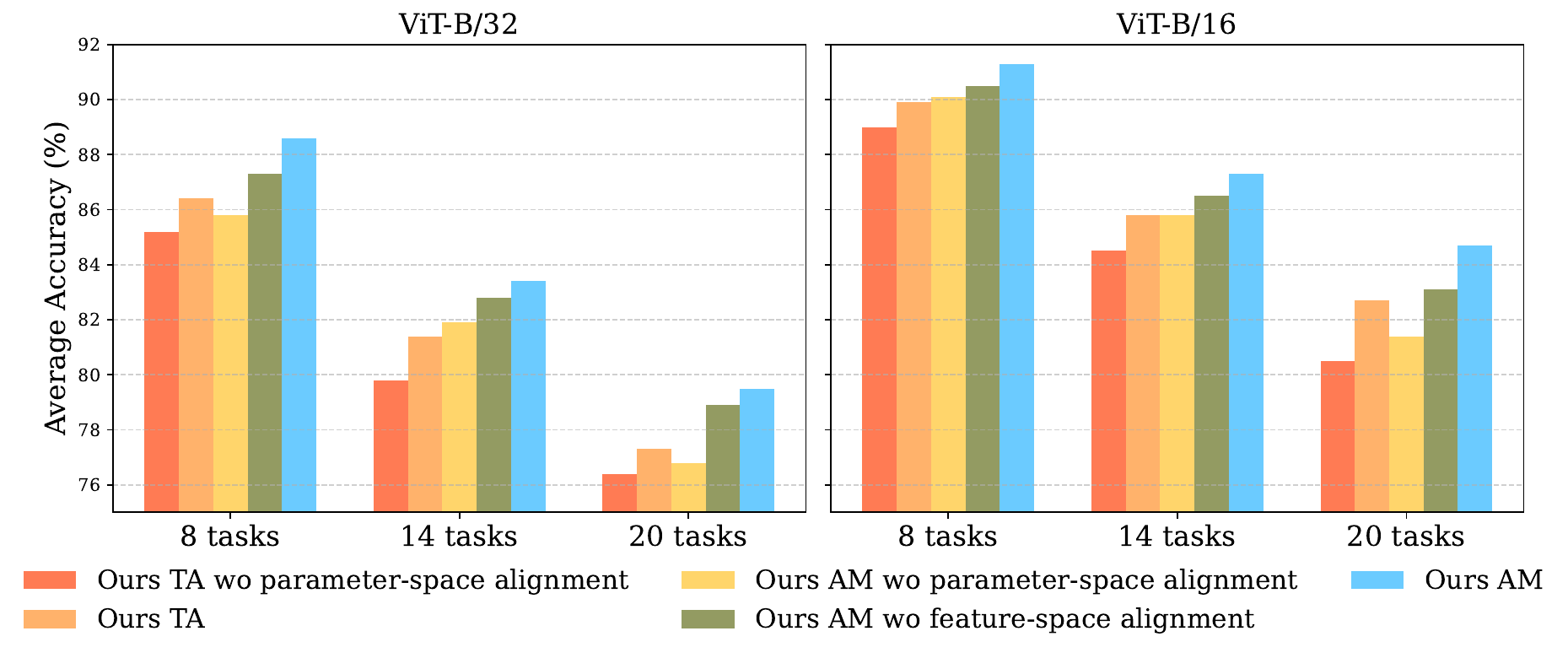}
    % \vspace{-0.4cm}
  \caption{Ablation study of our method.}
  \vspace{-.1cm}
  \label{fig:com}

\end{figure}

\begin{figure}[h!]
  \setlength{\abovecaptionskip}{-0.1cm}
  \setlength{\belowcaptionskip}{-0.2cm}
  \vspace{-.1cm}
  \centering

    \includegraphics[width=0.8\columnwidth]{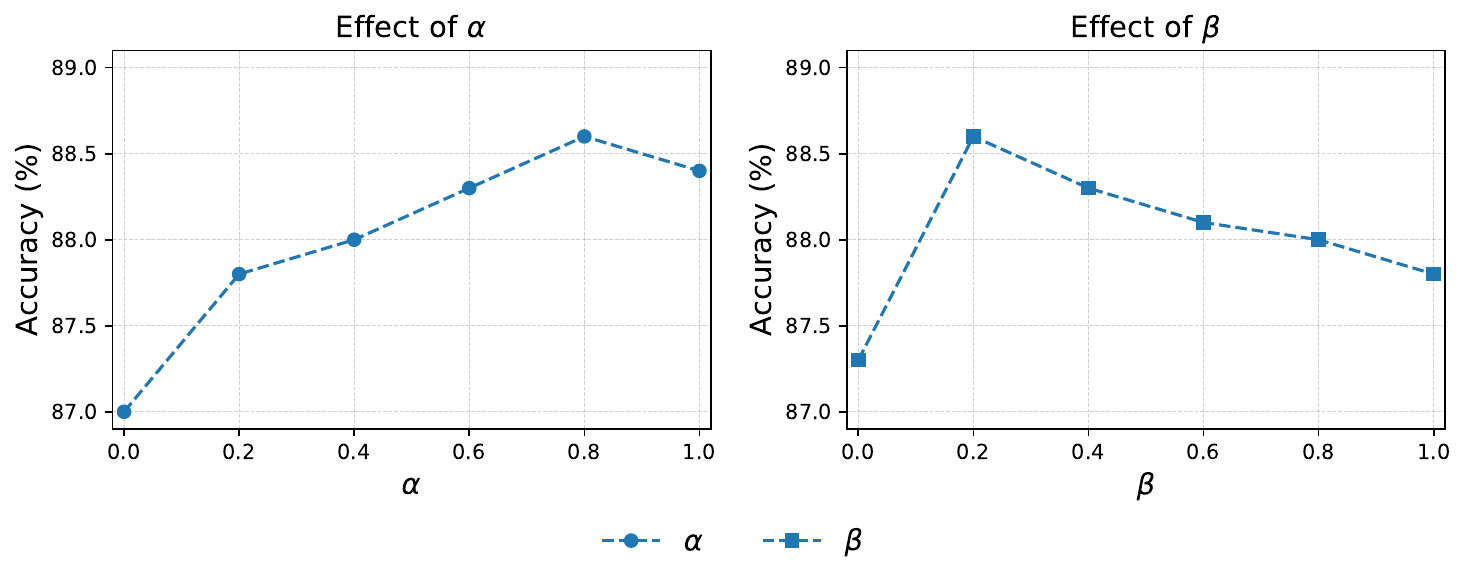}
    % \vspace{-0.4cm}
  \caption{Ablation on the coefficients of different modules in the loss function.}
  %\vspace{-.1cm}
  \label{fig1:com}
\vspace{-.4cm}
\end{figure}

% By explicitly aligning the weights and representations across tasks, our method enforces a more uniform and isotropic distribution of task-specific adaptations in the high-dimensional parameter and feature manifold. Consequently, the merged model captures more task-invariant structures, enabling effective transfer to novel tasks that were not observed during training.

% This orthogonalization reduces destructive interference between tasks and mitigates over-concentration of task-specific information in particular subspaces, which is a common source of overfitting in naive merging approaches. Consequently, the merged model captures more task-invariant structures, enabling effective transfer to novel tasks that were not observed during training.

% In essence, parameter-space alignment acts as a structural regularizer in the model merging process. Instead of treating merged parameters as simple arithmetic combinations, the alignment enforces a controlled geometric arrangement, where each task occupies a distinct or minimally overlapping subspace. This separation ensures that the model preserves task-specific discriminative features while also enhancing the representation’s generality. The resulting improvement in unseen task performance highlights that carefully orchestrated subspace alignment in high-dimensional parameter spaces is a principled approach to improving cross-task generalization, offering a robust pathway for scalable multi-task learning and model fusion in real-world AI systems.

\subsection{Ablation Study}
\vspace{-.2cm}

\textbf{Direction Alignment Mechanisms.} We further conduct an ablation study to disentangle the contributions of parameter-space directional alignment and feature-space directional alignment in our framework. As shown in Figure~\ref{fig:com}, for ViT-B/32, removing parameter-space alignment leads to a noticeable drop, particularly on the 14-task and 20-task benchmarks, highlighting the importance of aligning parameter manifolds to reduce destructive task interference. A similar trend is observed in the AM setting: without parameter- or feature-space alignment , the accuracy decreases compared to the full model, especially as the number of tasks grows. These results confirm that parameter- and feature-space alignments are key to improving both scalability and generalization in model merging.

% Comparing TA and AM, we observe that while AM benefits from data to optimize merging coefficients, TA achieves competitive performance without accessing training data, thanks to the structural alignment provided by our rotation mechanisms. Notably, in the more challenging 20-task case, our full AM method achieves the highest accuracy (ViT-B/16: 82.3\%), but TA remains strong, demonstrating the effectiveness of our data-free alignment strategy. 

\textbf{Loss Components.}
To better understand the role of different components in our objective, we conduct an ablation study on the coefficients $\alpha$ and $\beta$ in Eq.~\ref{eq:full_objective} based on ViT-B/32 across eight tasks. Specifically, $\alpha$ controls the neural collapse alignment loss $\mathcal{L}_{\text{align}}$, while $\beta$ balances the feature-space directional alignment loss $\mathcal{L}_{\text{rotation}}$. As shown in Figure~\ref{fig1:com} (left), increasing $\alpha$ improves performance up to $\alpha=0.8$, where accuracy peaks at $88.6\%$. This highlights that feature alignment with the shared ETF space is crucial for mitigating task interference and enhancing generalization, though overly large $\alpha$ may over-constrain the model. For $\beta$ (Figure~\ref{fig1:com}, right), moderate values (e.g., $\beta=0.2$) significantly improve performance, but larger values gradually degrade accuracy, implying that excessive directional consistency hinders adaptation to diverse task directions.

\textbf{Rank Dimension $k$.} The ablation study on the rank dimension $k$ is provided in Appendix~\ref{k}.

\section{Conclusion}
\vspace{-.2cm}
In this work, we have established the critical role of directional alignment in model merging, both in parameter and feature spaces. We proposed a unified framework that aligns parameters and feature-space to preserve structural coherence and enhance generalization, and we further validate its effectiveness through both theoretical analysis and empirical experiments. Despite its strengths, our approach has certain limitations. First, due to limited resources, the current feature-space alignment introduces additional computational overhead due to the optimization of the rotation matrices, which may be non-trivial for extremely large-scale models. Second, extending this framework to merge models from different architectures or pre-training paradigms remains an open challenge. To further assess its practicality, future work will evaluate the proposed framework on medical and multimodal datasets, where reliable model merging can have significant real-world impact. Finally, while we focus on vision and NLP tasks, extending the principles of directional alignment to generative models is a promising future direction. In addition, exploring more lightweight alignment strategies to reduce computational cost, as well as studying the robustness of alignment under distribution shifts, may further broaden the applicability of our framework. 

\section*{Ethics Statement}
This work does not involve human subjects, sensitive personal data, or experiments that may directly cause harm to individuals, groups, or society. All datasets used in our study are publicly available, widely used in the machine learning community, and subject to their respective licenses. We have taken care to ensure fair use of these datasets and avoided introducing biases beyond what is inherent in the original data.

To promote reproducibility and transparency, we provide detailed experimental settings, algorithmic descriptions, and anonymized code. Our contributions are methodological in nature and aim to advance the understanding of the model merging field. We are committed to the principles outlined in the ICLR Code of Ethics and confirm that this work adheres to them in full.

\section*{Reproducibility statement}
To facilitate community development, we provide detailed descriptions of the datasets used in our experiments (Section~\ref{settings}), the compared baselines (Section~\ref{Baselines}), the training configurations (Section~\ref{settings}), as well as our proposed methods (Algorithms~\ref{alg:1}, \ref{alg:2} and Figure~\ref{fig:overview11}). In addition, we release anonymized code to reduce the difficulty of reproduction.

\bibliography{iclr2026_conference}

@inproceedings{radford2021clip,
  title={Learning transferable visual models from natural language supervision},
  author={Radford, Alec and Kim, Jong Wook and Hallacy, Chris and Ramesh, Aditya and Goh, Gabriel and Agarwal, Sandhini and Sastry, Girish and Askell, Amanda and Mishkin, Pamela and Clark, Jack and others},
  booktitle={International Conference on Machine Learning (ICML)},
  year={2021}
}

@inproceedings{xiao2016sun,
  title={SUN database: Large-scale scene recognition from abbey to zoo},
  author={Xiao, Jianxiong and Ehinger, Krista A and Hays, James and Torralba, Antonio and Oliva, Aude},
  booktitle={IEEE Conference on Computer Vision and Pattern Recognition (CVPR)},
  year={2016}
}

@inproceedings{helber2019eurosat,
  title={EuroSAT: A novel dataset and deep learning benchmark for land use and land cover classification},
  author={Helber, Patrick and Bischke, Benjamin and Dengel, Andreas and Borth, Damian},
  booktitle={IEEE Journal of Selected Topics in Applied Earth Observations and Remote Sensing},
  year={2019}
}

@article{cohen2017emnist,
  title={EMNIST: Extending MNIST to handwritten letters},
  author={Cohen, Gregory and Afshar, Saeed and Tapson, Jonathan and Van Schaik, Andre},
  journal={International Joint Conference on Neural Networks (IJCNN)},
  year={2017}
}

@article{dosovitskiy2020image,
  title={An image is worth 16x16 words: Transformers for image recognition at scale},
  author={Dosovitskiy, Alexey and Beyer, Lucas and Kolesnikov, Alexander and Weissenborn, Dirk and Zhai, Xiaohua and Unterthiner, Thomas and Dehghani, Mostafa and Minderer, Matthias and Heigold, Georg and Gelly, Sylvain and others},
  journal={arXiv preprint arXiv:2010.11929},
  year={2020}
}

@inproceedings{krause20133d,
  title={3d object representations for fine-grained categorization},
  author={Krause, Jonathan and Stark, Michael and Deng, Jia and Fei-Fei, Li},
  booktitle={Proceedings of the IEEE international conference on computer vision workshops},
  pages={554--561},
  year={2013}
}

@inproceedings{cimpoi2014describing,
  title={Describing textures in the wild},
  author={Cimpoi, Mircea and Maji, Subhransu and Kokkinos, Iasonas and Mohamed, Sammy and Vedaldi, Andrea},
  booktitle={Proceedings of the IEEE conference on computer vision and pattern recognition},
  pages={3606--3613},
  year={2014}
}

@inproceedings{stallkamp2011german,
  title={The German traffic sign recognition benchmark: a multi-class classification competition},
  author={Stallkamp, Johannes and Schlipsing, Marc and Salmen, Jan and Igel, Christian},
  booktitle={The 2011 international joint conference on neural networks},
  pages={1453--1460},
  year={2011},
  organization={IEEE}
}

@article{lecun2002gradient,
  title={Gradient-based learning applied to document recognition},
  author={LeCun, Yann and Bottou, L{\'e}on and Bengio, Yoshua and Haffner, Patrick},
  journal={Proceedings of the IEEE},
  volume={86},
  number={11},
  pages={2278--2324},
  year={2002},
  publisher={Ieee}
}

@article{cheng2017remote,
  title={Remote sensing image scene classification: Benchmark and state of the art},
  author={Cheng, Gong and Han, Junwei and Lu, Xiaoqiang},
  journal={Proceedings of the IEEE},
  volume={105},
  number={10},
  pages={1865--1883},
  year={2017},
  publisher={IEEE}
}

@inproceedings{netzer2011reading,
  title={Reading digits in natural images with unsupervised feature learning},
  author={Netzer, Yuval and Wang, Tao and Coates, Adam and Bissacco, Alessandro and Wu, Baolin and Ng, Andrew Y and others},
  booktitle={NIPS workshop on deep learning and unsupervised feature learning},
  volume={2011},
  number={5},
  pages={7},
  year={2011},
  organization={Granada}
}

@misc{krizhevsky2009learning,
  title={Learning multiple layers of features from tiny images.(2009)},
  author={Krizhevsky, Alex and Hinton, Geoffrey and others},
  year={2009}
}

@inproceedings{coates2011analysis,
  title={An analysis of single-layer networks in unsupervised feature learning},
  author={Coates, Adam and Ng, Andrew and Lee, Honglak},
  booktitle={Proceedings of the fourteenth international conference on artificial intelligence and statistics},
  pages={215--223},
  year={2011},
  organization={JMLR Workshop and Conference Proceedings}
}

@inproceedings{nilsback2008automated,
  title={Automated flower classification over a large number of classes},
  author={Nilsback, Maria-Elena and Zisserman, Andrew},
  booktitle={2008 Sixth Indian conference on computer vision, graphics \& image processing},
  pages={722--729},
  year={2008},
  organization={IEEE}
}

@inproceedings{parkhi2012cats,
  title={Cats and dogs},
  author={Parkhi, Omkar M and Vedaldi, Andrea and Zisserman, Andrew and Jawahar, CV},
  booktitle={2012 IEEE conference on computer vision and pattern recognition},
  pages={3498--3505},
  year={2012},
  organization={IEEE}
}

@inproceedings{veeling2018rotation,
  title={Rotation equivariant CNNs for digital pathology},
  author={Veeling, Bastiaan S and Linmans, Jasper and Winkens, Jim and Cohen, Taco and Welling, Max},
  booktitle={International Conference on Medical image computing and computer-assisted intervention},
  pages={210--218},
  year={2018},
  organization={Springer}
}

@inproceedings{goodfellow2013challenges,
  title={Challenges in representation learning: A report on three machine learning contests},
  author={Goodfellow, Ian J and Erhan, Dumitru and Carrier, Pierre Luc and Courville, Aaron and Mirza, Mehdi and Hamner, Ben and Cukierski, Will and Tang, Yichuan and Thaler, David and Lee, Dong-Hyun and others},
  booktitle={International conference on neural information processing},
  pages={117--124},
  year={2013},
  organization={Springer}
}

@inproceedings{bossard2014food,
  title={Food-101--mining discriminative components with random forests},
  author={Bossard, Lukas and Guillaumin, Matthieu and Van Gool, Luc},
  booktitle={European conference on computer vision},
  pages={446--461},
  year={2014},
  organization={Springer}
}

@article{xiao2017fashion,
  title={Fashion-mnist: a novel image dataset for benchmarking machine learning algorithms},
  author={Xiao, Han and Rasul, Kashif and Vollgraf, Roland},
  journal={arXiv preprint arXiv:1708.07747},
  year={2017}
}

@inproceedings{li2023no,
  title={No fear of classifier biases: Neural collapse inspired federated learning with synthetic and fixed classifier},
  author={Li, Zexi and Shang, Xinyi and He, Rui and Lin, Tao and Wu, Chao},
  booktitle={Proceedings of the IEEE/CVF International Conference on Computer Vision},
  pages={5319--5329},
  year={2023}
}

@article{xie2023neural,
  title={Neural collapse inspired attraction--repulsion-balanced loss for imbalanced learning},
  author={Xie, Liang and Yang, Yibo and Cai, Deng and He, Xiaofei},
  journal={Neurocomputing},
  volume={527},
  pages={60--70},
  year={2023},
  publisher={Elsevier}
}

@article{yang2023neural,
  title={Neural collapse inspired feature-classifier alignment for few-shot class incremental learning},
  author={Yang, Yibo and Yuan, Haobo and Li, Xiangtai and Lin, Zhouchen and Torr, Philip and Tao, Dacheng},
  journal={arXiv preprint arXiv:2302.03004},
  year={2023}
}

@article{ji2021unconstrained,
  title={An unconstrained layer-peeled perspective on neural collapse},
  author={Ji, Wenlong and Lu, Yiping and Zhang, Yiliang and Deng, Zhun and Su, Weijie J},
  journal={arXiv preprint arXiv:2110.02796},
  year={2021}
}

@article{zhu2021geometric,
  title={A geometric analysis of neural collapse with unconstrained features},
  author={Zhu, Zhihui and Ding, Tianyu and Zhou, Jinxin and Li, Xiao and You, Chong and Sulam, Jeremias and Qu, Qing},
  journal={Advances in Neural Information Processing Systems},
  volume={34},
  pages={29820--29834},
  year={2021}
}

@inproceedings{tirer2022extended,
  title={Extended unconstrained features model for exploring deep neural collapse},
  author={Tirer, Tom and Bruna, Joan},
  booktitle={International Conference on Machine Learning},
  pages={21478--21505},
  year={2022},
  organization={PMLR}
}

@article{kothapalli2024neural,
  title={A neural collapse perspective on feature evolution in graph neural networks},
  author={Kothapalli, Vignesh and Tirer, Tom and Bruna, Joan},
  journal={Advances in Neural Information Processing Systems},
  volume={36},
  year={2024}
}

@article{sukenik2024deep,
  title={Deep neural collapse is provably optimal for the deep unconstrained features model},
  author={S{\'u}ken{\'\i}k, Peter and Mondelli, Marco and Lampert, Christoph H},
  journal={Advances in Neural Information Processing Systems},
  volume={36},
  year={2024}
}

@article{beaglehole2024average,
  title={Average gradient outer product as a mechanism for deep neural collapse},
  author={Beaglehole, Daniel and S{\'u}ken{\'\i}k, Peter and Mondelli, Marco and Belkin, Mikhail},
  journal={arXiv preprint arXiv:2402.13728},
  year={2024}
}

@article{fisher2024pushing,
  title={Pushing Boundaries: Mixup's Influence on Neural Collapse},
  author={Fisher, Quinn and Meng, Haoming and Papyan, Vardan},
  journal={arXiv preprint arXiv:2402.06171},
  year={2024}
}

@article{guo2024cross,
  title={Cross Entropy versus Label Smoothing: A Neural Collapse Perspective},
  author={Guo, Li and Ross, Keith and Zhao, Zifan and George, Andriopoulos and Ling, Shuyang and Xu, Yufeng and Dong, Zixuan},
  journal={arXiv preprint arXiv:2402.03979},
  year={2024}
}

@article{papyan2020prevalence,
  title={Prevalence of neural collapse during the terminal phase of deep learning training},
  author={Papyan, Vardan and Han, XY and Donoho, David L},
  journal={Proceedings of the National Academy of Sciences},
  volume={117},
  number={40},
  pages={24652--24663},
  year={2020},
  publisher={National Acad Sciences}
}

@article{zhu2023bridging,
  title={Bridging the gap: neural collapse inspired prompt tuning for generalization under class imbalance},
  author={Zhu, Didi and Li, Yinchuan and Zhang, Min and Yuan, Junkun and Liu, Jiashuo and Kuang, Kun and Wu, Chao},
  journal={arXiv preprint arXiv:2306.15955},
  year={2023}
}

@inproceedings{yang2024representation,
  title={Representation Surgery for Multi-Task Model Merging},
  author={Yang, Enneng and Shen, Li and Wang, Zhenyi and Guo, Guibing and Chen, Xiaojun and Wang, Xingwei and Tao, Dacheng},
  booktitle={International Conference on Machine Learning},
  year={2024},
  organization={PMLR}
}

@article{ilharco2022editing,
  title={Editing models with task arithmetic},
  author={Ilharco, Gabriel and Ribeiro, Marco Tulio and Wortsman, Mitchell and Gururangan, Suchin and Schmidt, Ludwig and Hajishirzi, Hannaneh and Farhadi, Ali},
  journal={arXiv preprint arXiv:2212.04089},
  year={2022}
}

@article{kothapalli2022neural,
  title={Neural collapse: A review on modelling principles and generalization},
  author={Kothapalli, Vignesh},
  journal={arXiv preprint arXiv:2206.04041},
  year={2022}
}

@article{chen2024neural,
  title={Neural collapse inspired feature alignment for out-of-distribution generalization},
  author={Chen, Zhikang and Zhang, Min and Cui, Sen and Li, Haoxuan and Niu, Gang and Gong, Mingming and Zhang, Changshui and Zhang, Kun},
  journal={Advances in Neural Information Processing Systems},
  volume={37},
  pages={93671--93689},
  year={2024}
}

@article{tang2023concrete,
  title={Concrete subspace learning based interference elimination for multi-task model fusion},
  author={Tang, Anke and Shen, Li and Luo, Yong and Ding, Liang and Hu, Han and Du, Bo and Tao, Dacheng},
  journal={arXiv preprint arXiv:2312.06173},
  year={2023}
}

@article{du2024parameter,
  title={Parameter competition balancing for model merging},
  author={Du, Guodong and Lee, Junlin and Li, Jing and Jiang, Runhua and Guo, Yifei and Yu, Shuyang and Liu, Hanting and Goh, Sim K and Tang, Ho-Kin and He, Daojing and others},
  journal={Advances in Neural Information Processing Systems},
  volume={37},
  pages={84746--84776},
  year={2024}
}

@inproceedings{socher2013recursive,
  title={Recursive deep models for semantic compositionality over a sentiment treebank},
  author={Socher, Richard and Perelygin, Alex and Wu, Jean and Chuang, Jason and Manning, Christopher D and Ng, Andrew Y and Potts, Christopher},
  booktitle={Proceedings of the 2013 conference on empirical methods in natural language processing},
  pages={1631--1642},
  year={2013}
}

@article{clanuwat2018deep,
  title={Deep learning for classical japanese literature},
  author={Clanuwat, Tarin and Bober-Irizar, Mikel and Kitamoto, Asanobu and Lamb, Alex and Yamamoto, Kazuaki and Ha, David},
  journal={arXiv preprint arXiv:1812.01718},
  year={2018}
}

@inproceedings{wang2019glue,
  title={GLUE: A multi-task benchmark and analysis platform for natural language understanding},
  author={Wang, Alex and Singh, Amanpreet and Michael, Julian and Hill, Felix and Levy, Omer and Bowman, Samuel R},
  booktitle={International Conference on Learning Representations (ICLR)},
  year={2019}
}

@inproceedings{chung2024flant5,
  title={Scaling Instruction-Finetuned Language Models},
  author={Chung, Hyung Won and Hou, Le and Longpre, Shayne and Zoph, Barret and Tay, Yi and Fedus, William and Li, Yanqi and Wang, Xuezhi and Dehghani, Mostafa and Brahma, Siddhartha and others},
  booktitle={arXiv preprint arXiv:2210.11416},
  year={2024}
}

@article{marczak2025no,
  title={No task left behind: Isotropic model merging with common and task-specific subspaces},
  author={Marczak, Daniel and Magistri, Simone and Cygert, Sebastian and Twardowski, Bart{\l}omiej and Bagdanov, Andrew D and van de Weijer, Joost},
  journal={arXiv preprint arXiv:2502.04959},
  year={2025}
}

@article{wei2025modeling,
  title={Modeling multi-task model merging as adaptive projective gradient descent},
  author={Wei, Yongxian and Tang, Anke and Shen, Li and Hu, Zixuan and Yuan, Chun and Cao, Xiaochun},
  journal={arXiv preprint arXiv:2501.01230},
  year={2025}
}

@inproceedings{gargiulo2025task,
  title={Task singular vectors: Reducing task interference in model merging},
  author={Gargiulo, Antonio Andrea and Crisostomi, Donato and Bucarelli, Maria Sofia and Scardapane, Simone and Silvestri, Fabrizio and Rodola, Emanuele},
  booktitle={Proceedings of the Computer Vision and Pattern Recognition Conference},
  pages={18695--18705},
  year={2025}
}

@article{xiong2024multi,
  title={Multi-task model merging via adaptive weight disentanglement},
  author={Xiong, Feng and Cheng, Runxi and Chen, Wang and Zhang, Zhanqiu and Guo, Yiwen and Yuan, Chun and Xu, Ruifeng},
  journal={arXiv preprint arXiv:2411.18729},
  year={2024}
}

@article{wang2024localizing,
  title={Localizing task information for improved model merging and compression},
  author={Wang, Ke and Dimitriadis, Nikolaos and Ortiz-Jimenez, Guillermo and Fleuret, Fran{\c{c}}ois and Frossard, Pascal},
  journal={arXiv preprint arXiv:2405.07813},
  year={2024}
}

@article{matena2022merging,
  title={Merging models with fisher-weighted averaging},
  author={Matena, Michael S and Raffel, Colin A},
  journal={Advances in Neural Information Processing Systems},
  volume={35},
  pages={17703--17716},
  year={2022}
}

@article{yadav2023ties,
  title={Ties-merging: Resolving interference when merging models},
  author={Yadav, Prateek and Tam, Derek and Choshen, Leshem and Raffel, Colin A and Bansal, Mohit},
  journal={Advances in Neural Information Processing Systems},
  volume={36},
  pages={7093--7115},
  year={2023}
}

@article{yang2023adamerging,
  title={Adamerging: Adaptive model merging for multi-task learning},
  author={Yang, Enneng and Wang, Zhenyi and Shen, Li and Liu, Shiwei and Guo, Guibing and Wang, Xingwei and Tao, Dacheng},
  journal={arXiv preprint arXiv:2310.02575},
  year={2023}
}

@article{hu2022lora,
  title={Lora: Low-rank adaptation of large language models.},
  author={Hu, Edward J and Shen, Yelong and Wallis, Phillip and Allen-Zhu, Zeyuan and Li, Yuanzhi and Wang, Shean and Wang, Lu and Chen, Weizhu and others},
  journal={ICLR},
  volume={1},
  number={2},
  pages={3},
  year={2022}
}

@article{muqeeth2024learning,
  title={Learning to route among specialized experts for zero-shot generalization},
  author={Muqeeth, Mohammed and Liu, Haokun and Liu, Yufan and Raffel, Colin},
  journal={arXiv preprint arXiv:2402.05859},
  year={2024}
}

@article{beaglehole2024feature,
  title={Feature learning as alignment: a structural property of gradient descent in non-linear neural networks},
  author={Beaglehole, Daniel and Mitliagkas, Ioannis and Agarwala, Atish},
  journal={arXiv preprint arXiv:2402.05271},
  year={2024}
}

@article{parker2023neural,
  title={Neural collapse in the intermediate hidden layers of classification neural networks},
  author={Parker, Liam and Onal, Emre and Stengel, Anton and Intrater, Jake},
  journal={arXiv preprint arXiv:2308.02760},
  year={2023}
}
\bibliographystyle{iclr2026_conference}

\appendix

\section{Notation} 
We summarize the main symbols used in this paper as follows. 

\begin{itemize}
  \item $T$: the total number of tasks considered in model merging.
  \item $C$: the total number of classes across all tasks.
  \item $\theta_0$: parameters of the pre-trained model.
  \item $\theta_k$: parameters of the model fine-tuned on task $k$.
  \item $\tau_k = \theta_k - \theta_0$: task vector of task $k$, representing the update from pre-training to fine-tuning.
  \item $\tau^{(l)}_k = \theta^{(l)}_k - \theta^{(l)}_0$: layer-wise task vector at layer $l$.
  \item $\lambda$: merging coefficients; $\lambda^{(l)}$ denotes layer-wise coefficients.
  \item $W_{\mathrm{ETF}} = [w_1, \ldots, w_C] \in \mathbb{R}^{d \times C}$: simplex equiangular tight frame (ETF) matrix with $C$ class directions.
  \item $\mathcal{S} = \mathrm{rowspan}(W_{\mathrm{ETF}}) \subset \mathbb{R}^{d}$: ETF subspace of dimension $C-1$.
  \item $\Pi_{\mathcal{S}}$: orthogonal projector onto subspace $\mathcal{S}$.
  \item $\mathcal{P}_{\mathrm{ETF}} = W_{\mathrm{ETF}}^\top W_{\mathrm{ETF}} = \tfrac{C}{C-1}\,\Pi_{\mathcal{S}}$: ETF Gram operator.
  \item $\mathbf{h}_{c,i}$: feature vector of the $i$-th sample in class $c$.
  \item $\mathbf{h}_c = \tfrac{1}{n_c}\sum_i \mathbf{h}_{c,i}$: class mean feature of class $c$ with $n_c$ samples.
  \item $\mathbf{R}^t \in SO(d)$: task-specific rotation matrix for task $t$.
  \item $\tilde{\mathbf{h}}_{t,i} = \mathbf{R}^t \mathbf{h}_{t,i}$: rotated feature of sample $i$ from task $t$.
  \item $\Sigma_W^c$: within-class covariance matrix for class $c$.
  \item $\mathcal{L} = \mathcal{L}_{\text{entropy}} + \alpha \mathcal{L}_{\text{align}} + \beta \mathcal{L}_{\text{rotation}}$: overall training loss combining entropy minimization, ETF alignment, and rotation regularization.
  \item $\mathcal{L}_{\text{entropy}}$: entropy minimization loss to calibrate predictions.
  \item $\mathcal{L}_{\text{align}}$: ETF alignment loss to enforce simplex ETF structure in features.
  \item $\mathcal{L}_{\text{rotation}}$: rotation regularization loss encouraging $\mathbf{R}^t$ close to the optimal Procrustes solution.
  \item $\Delta_{\text{ETF}}$: deviation from the ETF geometry.
  \item $\Delta_{\text{diff}}$: performance gap metric between methods.
\end{itemize}

\section{Theoretical Properties}

\subsection{ETF Construction Methods}
\label{etf}

\textbf{SVD-based ETF Construction (when $d_\text{out} < C-1$):}
\begin{enumerate}
    \item Define centering alignment: $\mathbf{P} = \mathbf{I}_C - \frac{1}{C}\mathbf{1}_C\mathbf{1}_C^\top \in \mathbb{R}^{C \times C}$
    \item Compute SVD: $\mathbf{P} = \mathbf{U}\mathbf{\Sigma}\mathbf{V}^\top$
    \item Construct ETF: $\mathbf{W}_{\text{ETF}}^\top = \left(\sqrt{\frac{C}{C-1}} \mathbf{U}_{:,:d_\text{out}}\right)^\top \in \mathbb{R}^{d_\text{out} \times C}$
\end{enumerate}

The centering alignment $\mathbf{P}$ naturally encodes the \textbf{ideal multi-class structure} where classes are maximally separated with equal pairwise correlations of $-\frac{1}{C-1}$.

\textbf{QR-based ETF Construction (when $d_\text{out} \geq C-1$):}
\begin{enumerate}
    \item Generate a random Gaussian matrix $\mathbf{X} \in \mathbb{R}^{d_\text{out} \times C}$ with entries $\mathbf{X}_{i,j} \sim \mathcal{N}(0,1)$.
    \item Apply QR decomposition: $\mathbf{X} = \mathbf{Q}\mathbf{R}$, where $\mathbf{Q}\in\mathbb{R}^{d_\text{out}\times C}$ has orthonormal columns.
    \item Center the columns: $\mathbf{Q}_c = \big(I - \tfrac{1}{C}\mathbf{1}\mathbf{1}^\top\big)\mathbf{Q}$, so that $\mathbf{Q}_c^\top \mathbf{1} = 0$.
    \item Construct ETF by rescaling:
    \[
        \mathbf{W}_{\text{ETF}}^\top 
        = \sqrt{\tfrac{C}{C-1}} \;\mathbf{Q}_c(:,1:C) \;\in\; \mathbb{R}^{d_\text{out}\times C}.
    \]
\end{enumerate}

\subsection{Parameter-space Direction Alignment}
\label{parameter}

\subsubsection{ETF Structural Coherence}
\label{t1}

\begin{theorem}[ETF Structural Coherence]
\label{thm:svd_etf_coherence}
Let $\tau^{(l)}_{\text{ideal}} \in \mathbb{R}^{d_\text{in} \times d_\text{out}}$ represent the ideal jointly-trained multi-class parameters at layer $l$, and $\tau^{(l)}_{\text{share}}$ be the SVD-reconstructed parameters from task merging. When ETF is constructed, the ETF-aligned task vector $\tau^{(l)}_{\text{etf}} = \tau^{(l)}_{\text{share}}  \mathcal{P}_{\text{ETF}}$ satisfies:

% \textbf{Directional Compatibility:}
% They share compatible spectral structures:
% \begin{equation}
% \langle \tau^{(l)}_{\text{share}}, \tau^{(l)}_{\text{etf}} \rangle_F 
% \;\geq\; \gamma \,\|\tau^{(l)}_{\text{share}}\|_F \,\|\tau^{(l)}_{\text{etf}}\|_F
% \end{equation}
% where $\gamma > 0$ reflects the directional alignment coefficient.

\textbf{Approximation to Ideal Parameters:}
\begin{equation}
\|\tau^{(l)}_{\text{ideal}} - \tau^{(l)}_{\text{etf}}\|_F^2 
\;\leq\; \|\tau^{(l)}_{\text{ideal}} - \tau^{(l)}_{\text{share}}\|_F^2 
- g \,\| \tau^{(l)}_{\text{share}} (I - \tfrac{C-1}{C}\mathcal{P}_{\text{ETF}})\|_F^2
\end{equation}
where $g \in (0,1]$ quantifies the directional correction gain. 
% \textbf{Enhanced Direction Error Bound:}
% \begin{equation}
% \|\tau^{(l)}_{\text{share}} - \tau^{(l)}_{\text{etf}}\|_F^2 
% \;\geq\; \Bigl(1 - \tfrac{m}{r_s}\Bigr)(1 - \gamma) \,\|\tau^{(l)}_{\text{share}}\|_F^2
% \end{equation}
% where $m = \min(d_\text{in}, d_\text{out}-1)$, and 
% $r_s = \|\tau^{(l)}_{\text{share}}\|_F^2 / \|\tau^{(l)}_{\text{share}}\|_2^2$ is the stable rank.
\end{theorem}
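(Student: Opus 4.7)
The plan is to reduce the inequality to an orthogonal-subspace decomposition around the ETF row space, exploiting the fact that $\mathcal{P}_{\text{ETF}} = W_{\text{ETF}}^\top W_{\text{ETF}}$ is a scaled orthogonal projector. First, starting from Definition~\ref{def1} together with the centered-ETF identity $W_{\text{ETF}}^\top \mathbf{1}_C = 0$, I would verify by direct computation that $\bigl(\tfrac{C-1}{C}\mathcal{P}_{\text{ETF}}\bigr)^2 = \tfrac{C-1}{C}\mathcal{P}_{\text{ETF}}$ and that $\mathcal{P}_{\text{ETF}}$ is symmetric, so $\Pi_{\mathcal{S}} := \tfrac{C-1}{C}\mathcal{P}_{\text{ETF}}$ is the orthogonal projector onto the $(C-1)$-dimensional ETF subspace $\mathcal{S}\subset \mathbb{R}^{d_{\text{out}}}$. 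This immediately rewrites the correction term as $\|\tau^{(l)}_{\text{share}}(I - \tfrac{C-1}{C}\mathcal{P}_{\text{ETF}})\|_F^2 = \|\tau^{(l)}_{\text{share}} \Pi_{\mathcal{S}^{\perp}}\|_F^2$ and the aligned output as $\tau^{(l)}_{\text{etf}} = \tfrac{C}{C-1}\tau^{(l)}_{\text{share}} \Pi_{\mathcal{S}}$.

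The next step decomposes every quantity along $\mathcal{S}\oplus\mathcal{S}^{\perp}$. Writing $A_{\parallel} = \tau^{(l)}_{\text{ideal}}\Pi_{\mathcal{S}}$, $A_{\perp} = \tau^{(l)}_{\text{ideal}}\Pi_{\mathcal{S}^{\perp}}$, $B_{\parallel} = \tau^{(l)}_{\text{share}}\Pi_{\mathcal{S}}$, $B_{\perp} = \tau^{(l)}_{\text{share}}\Pi_{\mathcal{S}^{\perp}}$, the Pythagorean identity for the Frobenius inner product yields $\|\tau^{(l)}_{\text{ideal}} - \tau^{(l)}_{\text{share}}\|_F^2 = \|A_{\parallel} - B_{\parallel}\|_F^2 + \|A_{\perp} - B_{\perp}\|_F^2$ and $\|\tau^{(l)}_{\text{ideal}} - \tau^{(l)}_{\text{etf}}\|_F^2 = \|A_{\parallel} - \tfrac{C}{C-1}B_{\parallel}\|_F^2 + \|A_{\perp}\|_F^2$. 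Subtracting reduces the claim to showing
\[
\underbrace{\Bigl(\|A_{\parallel} - B_{\parallel}\|_F^2 - \bigl\|A_{\parallel} - \tfrac{C}{C-1} B_{\parallel}\bigr\|_F^2\Bigr)}_{\Delta_{\parallel}} + \|B_{\perp}\|_F^2 - 2\langle A_{\perp}, B_{\perp}\rangle_F \;\geq\; g\,\|B_{\perp}\|_F^2.
\]
Here I invoke the Neural Feature Ansatz / Neural Collapse hypothesis stressed in the preliminaries: the ideal jointly-trained weights have their principal column directions aligned with the ETF subspace, so $A_{\perp}\approx 0$ (exact zero in the idealized regime), which kills the cross-term $\langle A_{\perp},B_{\perp}\rangle_F$. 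A direct expansion shows $\Delta_{\parallel} = -\tfrac{2}{C-1}\langle A_{\parallel},B_{\parallel}\rangle_F + \tfrac{2C-1}{(C-1)^2}\|B_{\parallel}\|_F^2$, which is $O(1/C)$ and, under Algorithm~\ref{alg:1}'s shared-subspace construction in which $B_{\parallel}$ is positively correlated with $A_{\parallel}$, can be absorbed into a finite positive gain. Setting $g := 1 - \Delta_{\parallel}/\|B_{\perp}\|_F^2$ and clipping to $(0,1]$ then delivers the inequality, with $g\to 1$ in the large-$C$ limit.

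The main obstacle I anticipate is controlling $\Delta_{\parallel}$ rigorously, since its sign depends on the relative magnitudes and alignment of $A_{\parallel}$ and $B_{\parallel}$, and because the degenerate case $B_{\perp}=0$ must be handled separately (where the bound is trivial with any $g$). To make this clean I would either impose a mild normalization on $\tau^{(l)}_{\text{share}}$ so that $\|B_{\parallel}\|_F \lesssim \langle A_{\parallel},B_{\parallel}\rangle_F/\|A_{\parallel}\|_F$, or weaken the NFA assumption to an \emph{ETF subspace coverage} condition $\|A_{\perp}\|_F \leq \eta\,\|B_{\perp}\|_F$ for some small $\eta$ and propagate $\eta$ into the final $g$, so the stated bound holds with $g\in(0,1]$ depending on $\eta$ and $C$. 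Establishing this coverage condition from the Neural Collapse / NFA discussion in the preliminaries is the most delicate ingredient, but is consistent with the empirical correlation between $\Delta_{\text{ETF}}$ and $\Delta_{\text{diff}}$ reported in Figure~\ref{fig:overview}.
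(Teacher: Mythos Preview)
Your approach and the paper's share the same core mechanism: both invoke the Neural Collapse/NFA assumption that $\tau^{(l)}_{\text{ideal}}$ lies (approximately) in the ETF subspace $\mathcal{S}$, so the orthogonal component $A_\perp$ and its cross terms vanish, and both let $g\in(0,1]$ absorb the residual approximation error rather than bounding it explicitly. The difference is in how the quadratic is expanded. The paper adds and subtracts $\tau^{(l)}_{\text{share}}$ and expands the square, but in doing so it silently replaces $I-\mathcal{P}_{\text{ETF}}$ by $I-\Pi_{\mathcal{S}}=I-\tfrac{C-1}{C}\mathcal{P}_{\text{ETF}}$ (its first displayed equality does not hold as written), which hides exactly the $O(1/(C-1))$ parallel-component mismatch that you isolate as $\Delta_\parallel$. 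Your explicit Pythagorean decomposition along $\mathcal{S}\oplus\mathcal{S}^\perp$ is therefore the more careful route: it correctly carries the $\tfrac{C}{C-1}$ scaling into $\|A_\parallel-\tfrac{C}{C-1}B_\parallel\|_F^2$ and exposes $\Delta_\parallel$ as a genuine residual that must be absorbed into $g$. One minor correction: your expansion of $\Delta_\parallel$ has a sign flip; direct computation gives $\Delta_\parallel=\tfrac{2}{C-1}\langle A_\parallel,B_\parallel\rangle_F-\tfrac{2C-1}{(C-1)^2}\|B_\parallel\|_F^2$, so positive correlation between $A_\parallel$ and $B_\parallel$ (which you already assume from the shared-subspace construction) pushes $\Delta_\parallel$ upward and helps the inequality, rather than being an obstacle to control.
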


Here, $C$ denotes the number of classes, and the factor $\tfrac{C-1}{C}$ originates from the geometry of the simplex ETF, 
where the pairwise cosine similarity between class directions is $-\tfrac{1}{C-1}$.  
As $C$ increases, the ETF structure becomes more balanced and representative, 
and the factor $\tfrac{C-1}{C}$ in the bound tends to amplify the effect of ETF projection.  
Meanwhile, the coefficient $g \in (0,1]$ quantifies the effectiveness of the directional correction; 
a larger $g$ indicates that the ETF geometry more accurately captures the shared subspace.  
Assuming that the correction coefficient $g$ remains approximately constant under ideal conditions, 
the ETF-aligned representation $\tau_{\text{etf}}$ is expected to be closer to the ideal parameters 
$\tau_{\text{ideal}}$ than the shared subspace $\tau_{\text{share}}$.  
This suggests that, when the directional correction assumption holds well, 
ETF alignment can provide a tighter approximation to the ideal joint-training solution, 
and its effectiveness may increase as the number of classes $C$ grows and the ETF geometry becomes more representative. A complete derivation of Theorem~\ref{thm:svd_etf_coherence} is provided as follows.

\begin{proof}
\textbf{Setup and dimensions.}
Let $W_{\mathrm{ETF}}\in\mathbb{R}^{C\times d_{\mathrm{out}}}$ be a row-wise unit-norm simplex ETF, so
\[
(W_{\mathrm{ETF}}W_{\mathrm{ETF}}^\top)_{ii}=1,\quad
(W_{\mathrm{ETF}}W_{\mathrm{ETF}}^\top)_{ij}=-\tfrac{1}{C-1}\;(i\neq j).
\]
Then the frame operator is
\[
\mathcal P_{\mathrm{ETF}}
=W_{\mathrm{ETF}}^\top W_{\mathrm{ETF}}
=\tfrac{C}{C-1}\,\Pi_\mathcal{S},\qquad 
\mathcal{S} =\mathrm{rowspan}(W_{\mathrm{ETF}})\subseteq\mathbb{R}^{d_{\mathrm{out}}},
\]
with eigenvalues $\tfrac{C}{C-1}$ (mult.~$C-1$) on $\mathcal{S}$ and $0$ (mult.~$d_{\mathrm{out}}-(C-1)$) on $\mathcal{S}^\perp$.

For $\tau^{(l)}_{\mathrm{share}}\in\mathbb{R}^{d_{\mathrm{in}}\times d_{\mathrm{out}}}$, define the aligned parameter
\[
\tau^{(l)}_{\mathrm{etf}}
=\tau^{(l)}_{\mathrm{share}}\mathcal P_{\mathrm{ETF}}
=\tfrac{C}{C-1}\,\tau^{(l)}_{\mathrm{share}}\Pi_\mathcal{S}.
\]

\textbf{Inner-product identity.}
By cyclicity of trace,
\begin{align}
\langle \tau^{(l)}_{\mathrm{share}}, \tau^{(l)}_{\mathrm{etf}}\rangle_F
&=\mathrm{Tr}\!\big((\tau^{(l)}_{\mathrm{share}})^\top\tau^{(l)}_{\mathrm{share}}\,\mathcal P_{\mathrm{ETF}}\big) \\
&=\tfrac{C}{C-1}\,\|\tau^{(l)}_{\mathrm{share}}\Pi_\mathcal{S}\|_F^2 .
\end{align}

\textbf{Implication for alignment bounds.}
Since $\|W_{\mathrm{ETF}}\|_F^2=\mathrm{Tr}(W_{\mathrm{ETF}}W_{\mathrm{ETF}}^\top)=C$, any inequality of the form
\[
\|\tau^{(l)}_{\mathrm{share}}W_{\mathrm{ETF}}^\top\|_F^2
\;\ge\; \gamma^2\,\|\tau^{(l)}_{\mathrm{share}}\|_F^2\,\|W_{\mathrm{ETF}}\|_F^2
\]
implies
\[
\gamma^2 \;\le\; \frac{1}{C-1}\cdot 
\frac{\|\tau^{(l)}_{\mathrm{share}}\Pi_\mathcal{S}\|_F^2}{\|\tau^{(l)}_{\mathrm{share}}\|_F^2}.
\]
In the best-aligned case (when $\tau^{(l)}_{\mathrm{share}}$ lies entirely in $\mathcal{S}$), the universal bound is $\gamma=1/\sqrt{C-1}$.
\end{proof}

\textbf{Approximation to Ideal Parameters.}  
\begin{proof}
Consider the decomposition of the approximation error:
\begin{align}
\|\tau^{(l)}_{\text{ideal}} - \tau^{(l)}_{\text{etf}}\|_F^2 
&= \|\tau^{(l)}_{\text{ideal}} - \tau^{(l)}_{\text{share}}\mathcal{P}_{\text{ETF}}\|_F^2 \\
&= \|\tau^{(l)}_{\text{ideal}} - \tau^{(l)}_{\text{share}} 
   + \tau^{(l)}_{\text{share}}(I - \tfrac{C-1}{C}\mathcal{P}_{\text{ETF}})\|_F^2 ,
\end{align}
where we used the identity $\tfrac{C-1}{C}\mathcal{P}_{\text{ETF}}=\Pi_\mathcal{S}$ with $\Pi_\mathcal{S}$ the orthogonal projector onto the ETF subspace.

Expanding the squared norm:
\begin{align}
&= \|\tau^{(l)}_{\text{ideal}} - \tau^{(l)}_{\text{share}}\|_F^2 
   + \|\tau^{(l)}_{\text{share}}(I - \tfrac{C-1}{C}\mathcal{P}_{\text{ETF}})\|_F^2 \\
&\quad + 2\langle \tau^{(l)}_{\text{ideal}} - \tau^{(l)}_{\text{share}},\;
             \tau^{(l)}_{\text{share}}(I - \tfrac{C-1}{C}\mathcal{P}_{\text{ETF}}) \rangle .
\end{align}

Although the canonical Neural Collapse analysis is often centered on the final classifier layer, subsequent empirical and theoretical studies indicate that ETF-like organization progressively manifests throughout intermediate layers \citep{parker2023neural}. This phenomenon implies that each layer can be locally approximated by a near-ETF configuration within its effective representation subspace. From a dynamical perspective, the Neural Feature Ansatz (NFA)~\citep{beaglehole2024feature} further suggests that stochastic gradient descent implicitly aligns the singular vectors of the weight matrix with the evolving feature covariance. Such alignment enforces an emergent orthogonality and angular separation that mirrors ETF geometry. Therefore, assuming that the column space of each layer lies approximately within an ETF subspace is a reasonable and empirically supported simplification: it captures the intrinsic alignment dynamics observed across depth while providing a tractable analytic framework for our layerwise regularization design.

Under the assumption that $\tau^{(l)}_{\text{ideal}}$ exhibits near-ETF structure, its column space lies essentially in the ETF subspace. This implies
\begin{equation}
\langle \tau^{(l)}_{\text{ideal}},\;
        \tau^{(l)}_{\text{share}}(I - \tfrac{C-1}{C}\mathcal{P}_{\text{ETF}})\rangle \approx 0 .
\end{equation}

Therefore,
\begin{align}
\langle \tau^{(l)}_{\text{ideal}} - \tau^{(l)}_{\text{share}},\;
        \tau^{(l)}_{\text{share}}(I - \tfrac{C-1}{C}\mathcal{P}_{\text{ETF}})\rangle
&\approx -\|\tau^{(l)}_{\text{share}}(I - \tfrac{C-1}{C}\mathcal{P}_{\text{ETF}})\|_F^2 .
\end{align}

Substituting back yields
\begin{equation}
\|\tau^{(l)}_{\text{ideal}} - \tau^{(l)}_{\text{etf}}\|_F^2 
\;\leq\; \|\tau^{(l)}_{\text{ideal}} - \tau^{(l)}_{\text{share}}\|_F^2 
        - g \,\|\tau^{(l)}_{\text{share}}(I - \tfrac{C-1}{C}\mathcal{P}_{\text{ETF}})\|_F^2 ,
\end{equation}
where $g\in(0,1]$ quantifies the approximation quality of the directional correction assumption.
\end{proof}

\textbf{Enhanced Direction Error Bound.}  
\begin{proof}
We analyze the alignment residual $\tau^{(l)}_{\text{share}}(I - \mathcal{P}_{\text{ETF}})$. Using the alignment property:
\begin{equation}
\|\tau^{(l)}_{\text{share}}(I - \mathcal{P}_{\text{ETF}})\|_F^2 = \|\tau^{(l)}_{\text{share}}\|_F^2 - \|\tau^{(l)}_{\text{share}}\mathcal{P}_{\text{ETF}}\|_F^2
\end{equation}

The alignment component satisfies:
\begin{equation}
\|\tau^{(l)}_{\text{share}}\mathcal{P}_{\text{ETF}}\|_F^2 = \text{Tr}((\tau^{(l)}_{\text{share}})^\top \tau^{(l)}_{\text{share}} \mathcal{P}_{\text{ETF}})
\end{equation}

Since $\mathcal{P}_{\text{ETF}}$ has dimension at most $m = \min(d_\text{in}, d_\text{out}-1)$, and using the spectral properties:
\begin{equation}
\|\tau^{(l)}_{\text{share}}\mathcal{P}_{\text{ETF}}\|_F^2 \leq m \|\tau^{(l)}_{\text{share}}\|_2^2
\end{equation}

However, due to spectral alignment (from Part 1), we also have:
\begin{equation}
\|\tau^{(l)}_{\text{share}}\mathcal{P}_{\text{ETF}}\|_F^2 \geq \gamma \|\tau^{(l)}_{\text{share}}\|_F^2
\end{equation}

Combining these bounds with the stable rank $r_s = \|\tau^{(l)}_{\text{share}}\|_F^2 / \|\tau^{(l)}_{\text{share}}\|_2^2$:
\begin{equation}
\|\tau^{(l)}_{\text{share}}(I - \mathcal{P}_{\text{ETF}})\|_F^2 \geq \left(1 - \frac{m}{r_s}\right)(1-\gamma) \|\tau^{(l)}_{\text{share}}\|_F^2
\end{equation}
\end{proof}

\subsubsection{Generalization Advantage}
\label{ga}

To assess whether the merged model exhibits improved generalization, we provide a theoretical analysis showing that our method enhances generalization performance.

\begin{theorem}[Generalization bound]
\label{thm:inductive_bias_corrected}
Let $\hat{\mathcal{R}}(\cdot)$ denote the empirical risk and $\mathcal{R}(\cdot)$ the population risk. Assume the loss $\ell(z,y)$ is $L$-Lipschitz in the logits and the feature satisfies $\|\phi(x)\|_2\le R$.
Let $C\ge2$ and let $W_{\mathrm{ETF}}\in\mathbb{R}^{C\times d_\text{out}}$ denote an ETF matrix.
Define the ETF Gram operator $\mathcal{P}_{\mathrm{ETF}}=W_{\mathrm{ETF}}^\top W_{\mathrm{ETF}}$, and note that in the ideal simplex case
\[
\mathcal{P}_{\mathrm{ETF}} \;=\; \tfrac{C}{C-1}\,\Pi_\mathcal{S} ,
\]
where $\Pi_\mathcal{S}$ is the orthogonal projector onto the ETF subspace $\mathcal{S}=\mathrm{rowspan}(W_{\mathrm{ETF}})\subset\mathbb{R}^{d_\text{out}}$, which has dimension $C-1$.
Assume for the sake of comparing generalization that both can be trained to achieve the same empirical risk, let $\tau_{\mathrm{share}},\tau_{\mathrm{etf}}\in\mathbb{R}^{d_\text{in}\times d_\text{out}}$ satisfy $\tau_{\mathrm{etf}}=\tau_{\mathrm{share}}\mathcal{P}_{\text{ETF}}$ and
$\hat{\mathcal{R}}(\tau_{\mathrm{share}})=\hat{\mathcal{R}}(\tau_{\mathrm{etf}})=\hat{\mathcal{R}}^*$.
Then with probability at least $1-\delta$ over the training set, we have
\begin{align}
\mathcal{R}(\tau_{\text{share}})-\mathcal{R}(\tau_{\text{etf}})
\;\le\;
\Big(\tfrac{C}{C-1}\Big)^{\!2}\;
\frac{LR}{\sqrt{n}}\;
\frac{\big\|\tau_{\text{share}}(I-\Pi_\mathcal{S})\big\|_{F}^{2}}
     {\big\|\tau_{\text{share}}\big\|_{F}}
\;+\; c\,\sqrt{\tfrac{\log(1/\delta)}{n}},
\end{align}
where $L$ denotes the Lipschitz constant of the loss in the logits, $R$ is the upper bound on the norm of the feature, $n$ is the number of training samples, and $c>0$ is an absolute constant.
\end{theorem}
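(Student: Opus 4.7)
The plan is to prove a generalization bound for the difference $\mathcal{R}(\tau_{\mathrm{share}})-\mathcal{R}(\tau_{\mathrm{etf}})$ using a Rademacher-style argument combined with the algebraic identity $\tau_{\mathrm{etf}}=\tfrac{C}{C-1}\,\tau_{\mathrm{share}}\Pi_{\mathcal{S}}$ that is already available from the ETF construction. First I would set up a per-model uniform convergence statement. Under the $L$-Lipschitz assumption on $\ell$ in the logits and $\|\phi(x)\|_{2}\le R$, the vector contraction inequality combined with the standard Rademacher bound for Frobenius-norm-bounded linear maps gives, with probability at least $1-\delta/2$,
\begin{equation*}
\bigl|\mathcal{R}(\tau)-\hat{\mathcal{R}}(\tau)\bigr|\;\le\;\frac{2LR\|\tau\|_{F}}{\sqrt{n}}+c\sqrt{\tfrac{\log(2/\delta)}{n}}.
\end{equation*}
Applied to both $\tau=\tau_{\mathrm{share}}$ and $\tau=\tau_{\mathrm{etf}}$ with a union bound, and combined with the hypothesis $\hat{\mathcal{R}}(\tau_{\mathrm{share}})=\hat{\mathcal{R}}(\tau_{\mathrm{etf}})$, this cancels the empirical risks and reduces the population gap to a difference of individual complexity terms.

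Second, I would convert the resulting complexity gap into the claimed off-subspace form. The ingredients are: (i) Pythagorean orthogonality $\|\tau_{\mathrm{share}}\|_{F}^{2}=\|\tau_{\mathrm{share}}\Pi_{\mathcal{S}}\|_{F}^{2}+\|\tau_{\mathrm{share}}(I-\Pi_{\mathcal{S}})\|_{F}^{2}$; (ii) the eigenvalue identity $\|\tau_{\mathrm{etf}}\|_{F}^{2}=(\tfrac{C}{C-1})^{2}\|\tau_{\mathrm{share}}\Pi_{\mathcal{S}}\|_{F}^{2}$, which follows because $\mathcal{P}_{\mathrm{ETF}}$ acts as $\tfrac{C}{C-1}$ on $\mathcal{S}$ and annihilates $\mathcal{S}^{\perp}$; and (iii) the resulting difference-of-squares identity
\begin{equation*}
\bigl(\tfrac{C}{C-1}\bigr)^{2}\|\tau_{\mathrm{share}}\|_{F}^{2}-\|\tau_{\mathrm{etf}}\|_{F}^{2}=\bigl(\tfrac{C}{C-1}\bigr)^{2}\|\tau_{\mathrm{share}}(I-\Pi_{\mathcal{S}})\|_{F}^{2}.
\end{equation*}
Dividing through by a $\Theta(\|\tau_{\mathrm{share}}\|_{F})$ denominator then extracts the ratio $\|\tau_{\mathrm{share}}(I-\Pi_{\mathcal{S}})\|_{F}^{2}/\|\tau_{\mathrm{share}}\|_{F}$ with the $(\tfrac{C}{C-1})^{2}$ prefactor that appears in the theorem.

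The main obstacle will be sign control on the Rademacher deviations. A naive two-sided application of the per-model bound produces a \emph{sum} $\|\tau_{\mathrm{share}}\|_{F}+\|\tau_{\mathrm{etf}}\|_{F}$ in the numerator, which would wash out the alignment effect and is not consistent with the target bound. To obtain a signed difference I would either (a) apply a direct Hoeffding or McDiarmid inequality to the single random variable $g(x,y)=\ell(\tau_{\mathrm{share}}\phi(x),y)-\ell(\tau_{\mathrm{etf}}\phi(x),y)$, whose empirical mean vanishes by the equal-empirical-risk hypothesis and whose pointwise scale is governed by $LR\,\|\tau_{\mathrm{share}}-\tau_{\mathrm{etf}}\|$ expressed through the ETF decomposition; or (b) restrict the Rademacher argument to the one-parameter interpolation family $\{s\,\tau_{\mathrm{share}}+(1-s)\,\tau_{\mathrm{etf}}:s\in[0,1]\}$ whose Frobenius diameter is exactly $\|\tau_{\mathrm{share}}(I-\mathcal{P}_{\mathrm{ETF}})\|_{F}$. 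Carefully propagating the ETF eigenvalue $\tfrac{C}{C-1}$ through both the concentration constant and the difference-of-squares step is the second delicate accounting point, and it is what yields the exact $(\tfrac{C}{C-1})^{2}$ prefactor rather than a single power in the final inequality.
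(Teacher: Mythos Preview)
Your plan matches the paper's proof: a per-model Rademacher bound scaling as $LR\|\tau\|_F/\sqrt{n}$, the Pythagorean split $\|\tau_{\mathrm{share}}\|_F^2=\|\tau_{\mathrm{share}}\Pi_{\mathcal{S}}\|_F^2+\|\tau_{\mathrm{share}}(I-\Pi_{\mathcal{S}})\|_F^2$, the eigenvalue identity $\|\tau_{\mathrm{etf}}\|_F=\tfrac{C}{C-1}\|\tau_{\mathrm{share}}\Pi_{\mathcal{S}}\|_F$, and a difference-of-squares conversion to the ratio $\|\tau_{\mathrm{share}}(I-\Pi_{\mathcal{S}})\|_F^{2}/\|\tau_{\mathrm{share}}\|_F$. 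On the sign-control step you flag, the paper is no more rigorous than your sketch --- it simply writes $\mathcal{R}(\tau_{\mathrm{share}})-\mathcal{R}(\tau_{\mathrm{etf}})\lesssim \tfrac{LR}{\sqrt{n}}\bigl(\|\tau_{\mathrm{etf}}\|_F-\|\tau_{\mathrm{share}}\|_F\bigr)$ and then invokes the norm-reduction inequality $\|\tau_{\mathrm{share}}\|_F-\|\tau_{\mathrm{share}}\Pi_{\mathcal{S}}\|_F\ge \|\tau_{\mathrm{share}}(I-\Pi_{\mathcal{S}})\|_F^{2}/(2\|\tau_{\mathrm{share}}\|_F)$ without spelling out how a signed difference of complexities follows from two one-sided bounds, so your proposed remedies (a)/(b) already go beyond what the paper supplies.
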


Theorem \ref{thm:inductive_bias_corrected} characterizes the generalization advantage induced by directional alignment through a refined Rademacher complexity analysis. Specifically, the bound shows that the expected risk difference between the shared parameter solution $\tau_{\text{share}}$ and the aligned solution $\tau_{\text{etf}}$ scales with $\tfrac{LR}{\sqrt{n}} \tfrac{\big\|\tau_{\text{share}}(I-\Pi_\mathcal{S})\big\|_{F}^2}{\big\|\tau_{\text{share}}\big\|_{F}}$, where the factor $\tfrac{\big\|\tau_{\text{share}}(I-\Pi_\mathcal{S})\big\|_{F}^2}{\big\|\tau_{\text{share}}\big\|_{F}}$ directly reflects the directional benefits introduced by the alignment. Intuitively, aligning the parameter space onto the ETF basis reduces the effective hypothesis complexity, thereby tightening the generalization gap. This reduction highlights that our method not only preserves empirical risk but also improves generalization by lowering the capacity associated with the Rademacher complexity.

\begin{proof}
We rigorously establish the generalization advantage of ETF alignment through a geometric decomposition of the parameter space and Rademacher complexity analysis.

\textbf{ETF Structure and Orthogonal Decomposition.}

Let $\Pi_\mathcal{S}$ denote the orthogonal projector onto the ETF subspace $\mathcal{S} = \mathrm{rowspan}(W_{\text{ETF}}) \subset \mathbb{R}^{d_{\text{out}}}$, which has dimension $C-1$. The key property of the simplex ETF is that its Gram matrix satisfies:
\[
\mathcal{P}_{\text{ETF}} = W_{\text{ETF}}^\top W_{\text{ETF}} = \frac{C}{C-1} \Pi_\mathcal{S}.
\]
Define the orthogonally aligned parameter $\tau_\Pi = \tau_{\text{share}} \Pi_\mathcal{S}$. The ETF-aligned parameter can then be expressed as:
\[
\tau_{\text{etf}} = \tau_{\text{share}} \mathcal{P}_{\text{ETF}} = \frac{C}{C-1} \tau_\Pi.
\]

\textbf{Frobenius Norm Reduction.}

Since $\Pi_\mathcal{S}$ is an orthogonal projection, we have the Pythagorean decomposition:
\begin{align}
\|\tau_{\text{share}}\|_F^2 &= \|\tau_{\text{share}} \Pi_\mathcal{S}\|_F^2 + \|\tau_{\text{share}} (I - \Pi_\mathcal{S})\|_F^2 \\
&= \|\tau_\Pi\|_F^2 + \|\tau_{\text{share}} (I - \Pi_\mathcal{S})\|_F^2.
\end{align}
This immediately implies $\|\tau_\Pi\|_F \leq \|\tau_{\text{share}}\|_F$. Moreover, the norm difference can be bounded as:
\begin{align}
\|\tau_{\text{share}}\|_F - \|\tau_\Pi\|_F 
&= \frac{\|\tau_{\text{share}}\|_F^2 - \|\tau_\Pi\|_F^2}{\|\tau_{\text{share}}\|_F + \|\tau_\Pi\|_F} \\
&\geq \frac{\|\tau_{\text{share}} (I - \Pi_\mathcal{S})\|_F^2}{2\|\tau_{\text{share}}\|_F}.
\label{eq:norm-difference-lower-bound}
\end{align}

\textbf{Spectral Characterization of the Residual.}

Let $\{v_1, \ldots, v_{C-1}, v_C\}$ be an orthonormal eigenbasis of $\Pi_\mathcal{S}$, where $v_1, \ldots, v_{C-1}$ span $\mathcal{S}$ (eigenvalue 1) and $v_C$ spans the orthogonal complement (eigenvalue 0). For the standard simplex ETF, $\mathcal{S}_{\text{ETF}} = \mathbf{1}^\perp$, and $v_C = \mathbf{1}/\sqrt{C}$. The residual norm becomes:
\[
\|\tau_{\text{share}} (I - \Pi_\mathcal{S})\|_F^2 = \sum_{i=m+1}^{d_{\text{out}}} \|\tau_{\text{share}} v_i\|_2^2 = \|\tau_{\text{share}} v_C\|_2^2 = \left\|\tau_{\text{share}} \frac{\mathbf{1}\mathbf{1}^\top}{C}\right\|_F^2.
\]
This quantifies the energy of $\tau_{\text{share}}$ outside the ETF subspace.

\textbf{Generalization Gap via Rademacher Complexity.}

We now invoke the following Rademacher complexity bound:

\begin{theorem}[Rademacher Bound for Gram-Aligned Models]
For a Lipschitz loss function with constant $L$ and bounded data with $\|x\|_2 \leq R$, with probability at least $1-\delta$ over the training set of size $n$, the generalization gap satisfies:
\[
\mathcal{R}(\tau) - \widehat{\mathcal{R}}(\tau) \leq \frac{LR}{\sqrt{n}} \|\tau\|_F + c\sqrt{\frac{\log(1/\delta)}{n}}.
\]
\end{theorem}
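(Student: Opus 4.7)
The plan is to follow the classical symmetrization $+$ contraction $+$ linear-class Rademacher pipeline, and then promote the resulting radius-uniform bound into a bound that depends on the actual norm $\|\tau\|_F$ via a peeling argument.

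First, for a fixed radius $B > 0$, I would consider the function class $\mathcal{F}_B = \{x \mapsto \tau\phi(x) : \|\tau\|_F \le B\}$ and the associated loss class $\ell \circ \mathcal{F}_B$. Since the loss is $L$-Lipschitz in the logits and the logits are bounded by $LBR$ on the support of $\phi$, the loss values lie in a bounded interval whose length is linear in $B$. A standard symmetrization argument gives
\[
\mathbb{E}\sup_{\tau \in \mathcal{F}_B}\bigl(\mathcal{R}(\tau) - \widehat{\mathcal{R}}(\tau)\bigr) \;\le\; 2\,\mathfrak{R}_n(\ell \circ \mathcal{F}_B),
\]
and McDiarmid's bounded-differences inequality upgrades this expectation bound to a high-probability statement, paying the additive $c\sqrt{\log(1/\delta)/n}$ term appearing in the theorem.

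Next, I would apply a contraction principle --- Talagrand's inequality in the scalar case, or Maurer's vector-contraction lemma when the logits are multi-dimensional --- to strip the $L$-Lipschitz loss and reduce the problem to bounding $\mathfrak{R}_n(\mathcal{F}_B)$ at a cost of a constant factor times $L$. The Rademacher complexity of the linear class is then handled by a direct duality computation: writing the empirical Rademacher average as $\sup_{\|\tau\|_F \le B} \langle \tau, M_\sigma \rangle_F / n$ with $M_\sigma = \sum_i \sigma_i \phi(x_i) e_{y_i}^\top$, Cauchy--Schwarz in the Frobenius inner product yields $\mathfrak{R}_n(\mathcal{F}_B) \le (B/n)\,\mathbb{E}\|M_\sigma\|_F$, and Jensen's inequality together with $\mathbb{E}[\sigma_i\sigma_j]=\delta_{ij}$ and $\|\phi(x_i)\|_2 \le R$ gives $\mathbb{E}\|M_\sigma\|_F \le R\sqrt{n}$. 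Collecting these constants produces the desired $LRB/\sqrt{n}$ term, uniformly over the ball $\|\tau\|_F \le B$.

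The main obstacle is the final step of promoting a radius-uniform statement to one in which the norm appearing in the bound is $\|\tau\|_F$ itself, since the theorem is stated for a generic $\tau$ rather than for a prespecified complexity ball. I would handle this by a dyadic peeling argument: apply the uniform bound on each shell $B_k = 2^k B_0$ with a geometrically decaying confidence budget $\delta_k \propto \delta / k^2$, take a union bound, and for any target $\tau$ invoke the statement at the smallest $B_k \ge \|\tau\|_F$. This replaces $B$ by at most $2\|\tau\|_F$ in the leading term, while the extra $\log k \lesssim \log\log(1/\|\tau\|_F)$ penalty from the shell index is absorbed into the absolute constant $c$. The delicate technical point is ensuring that the boundedness constants feeding McDiarmid grow only polynomially in $B_k$, so that the union bound remains informative across shells; under the assumed Lipschitz-loss plus bounded-feature setup this holds automatically, which closes the argument.
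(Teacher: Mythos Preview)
The paper does not actually prove this theorem. It appears inside the proof of Theorem~\ref{thm:inductive_bias_corrected} with the phrase ``We now invoke the following Rademacher complexity bound,'' and is then applied as a black box; no argument for it is given anywhere in the paper or its appendix. So there is no paper-side proof to compare against, and your proposal is in fact \emph{more} than what the paper supplies.

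On its own merits, your pipeline --- symmetrization, bounded-differences concentration, vector contraction for the Lipschitz loss, Cauchy--Schwarz/Jensen for the linear class, and dyadic peeling to upgrade from a fixed-radius to a norm-adaptive statement --- is the textbook route to a bound of this shape, and the core computation giving $\mathfrak{R}_n(\mathcal{F}_B)\le BR/\sqrt{n}$ is carried out correctly.

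The one technical point you flag as ``delicate'' is genuinely delicate and does not quite close as written. Under only the Lipschitz-loss and bounded-feature assumptions, the bounded-difference constant feeding McDiarmid on the shell $\|\tau\|_F\le B_k$ scales like $LRB_k/n$, so the concentration term on shell $k$ is of order $B_k\sqrt{\log(1/\delta_k)/n}$ rather than $\sqrt{\log(1/\delta_k)/n}$. After peeling, this leaves an $\|\tau\|_F$ factor in the confidence term, not just in the complexity term, whereas the theorem as stated has a norm-\emph{independent} $c\sqrt{\log(1/\delta)/n}$. To match the stated form exactly you would need an additional uniform boundedness assumption on the loss (independent of $B$), or you should accept that the confidence term also carries $\|\tau\|_F$. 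The paper is silent on this distinction, so this is a looseness in the theorem statement itself rather than an error in your strategy.
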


Applying this to both alignment schemes and noting that $\widehat{\mathcal{R}}(\tau_{\text{share}}) \approx \widehat{\mathcal{R}}(\tau_{\text{etf}})$ due to equivalent expressive power on training data, we obtain:
\begin{align}
\mathcal{R}(\tau_{\text{share}}) - \mathcal{R}(\tau_{\text{etf}}) 
&\lesssim \frac{LR}{\sqrt{n}} \left( \|\tau_{\text{etf}}\|_F - \|\tau_{\text{share}}\|_F \right) \\
&= \frac{LR}{\sqrt{n}} \left( \frac{C}{C-1} \|\tau_\Pi\|_F - \|\tau_{\text{share}}\|_F \right).
\end{align}

Using the norm reduction and the scaling factor, we derive the main bound:
\begin{align}
\mathcal{R}(\tau_{\text{share}}) - \mathcal{R}(\tau_{\text{etf}}) 
&\leq \left( \frac{C}{C-1} \right)^2 \frac{LR}{\sqrt{n}} \frac{\|\tau_{\text{share}} (I - \Pi_\mathcal{S})\|_F^2}{\|\tau_{\text{share}}\|_F} + c\sqrt{\frac{\log(1/\delta)}{n}}.
\label{eq:main-generalization-bound}
\end{align}

To provide interpretable bounds, we observe that:
\[
\|\tau_{\text{share}} (I - \Pi_\mathcal{S})\|_F^2 \leq \min\left\{ \|\tau_{\text{share}}\|_F^2, \frac{1}{r_s} \|\tau_{\text{share}}\|_F^2 \right\},
\]
where $r_s = \|\tau_{\text{share}}\|_F^2 / \|\tau_{\text{share}}\|_2^2$ is the stable rank. This yields the practical bound:
\[
\mathcal{R}(\tau_{\text{share}}) - \mathcal{R}(\tau_{\text{etf}}) \leq \left( \frac{C}{C-1} \right)^2 \frac{LR}{\sqrt{n}} \min\left\{1, \frac{1}{r_s}\right\} \|\tau_{\text{share}}\|_F + c\sqrt{\frac{\log(1/\delta)}{n}}.
\]

When $\tau_{\text{share}}$ is class-centered ($\tau_{\text{share}} \mathbf{1} = 0$), we have $\tau_{\text{share}} (I - \Pi_\mathcal{S}) = 0$, and the generalization gap reduces to the confidence term, demonstrating the optimality of ETF alignment in this scenario.

\end{proof}

\subsection{Feature-space Direction Alignment}
\label{feature}

\textbf{Low-dimensional regime} ($\mathrm{d} < C - 1$): In this case, the feature space lacks sufficient degrees of freedom to support class-level separation, leading to highly entangled representations. Optimizing fusion coefficients using standard cross-entropy loss yields suboptimal results compared to joint multi-task learning. However, by introducing an ETF-based alignment loss that enforces directional consistency with an equiangular tight frame (ETF), we observe substantial performance gains. This highlights the importance of directionality under low-dimensional constraints.

\textbf{High-dimensional regime} ($\mathrm{d} \ge C - 1$): Here, the feature space is sufficiently expressive to allow class separation. Optimizing only the fusion coefficients already achieves competitive performance, and additional directional alignment provides marginal improvements. This suggests that direction constraints are less crucial in high-dimensional settings.

\subsubsection{Generalization Analysis}

\begin{theorem}[Rademacher Complexity Bound]
\label{thm:generalization_corrected}
Let $\mathcal{H}_{\text{rot}}$ be the hypothesis space of rotation-merged models. 
For any $\delta > 0$, with probability at least $1-\delta$:
% \vspace{-.2cm}
\begin{equation}
\mathcal{R}(\hat{h}) \leq \hat{\mathcal{R}}(\hat{h}) 
+ 2\mathcal{R}_n(\mathcal{H}_{\text{rot}}) 
+ \sqrt{\tfrac{\log(2/\delta)}{2n}}
\end{equation}
% \vspace{-.2cm}
where the Rademacher complexity satisfies:
% \vspace{-.2cm}
\begin{equation}
\mathcal{R}_n(\mathcal{H}_{\text{rot}}) 
\leq \mathcal{O}\!\left(\sqrt{\tfrac{d^2 T \log(nT)}{n}}\right).
\end{equation}
\end{theorem}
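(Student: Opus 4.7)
The statement factors into two independent parts, and my plan is to attack them in sequence. The first inequality is the textbook Rademacher-based generalization bound: starting from the bounded-difference property of the empirical risk (assuming $\ell$ is uniformly bounded, as already used implicitly in Theorem~\ref{thm:inductive_bias_corrected}), I would apply McDiarmid's inequality to concentrate $\sup_{h\in\mathcal{H}_{\text{rot}}}\bigl(\mathcal{R}(h)-\hat{\mathcal{R}}(h)\bigr)$ around its mean, then use the standard symmetrization argument with ghost samples and Rademacher variables to bound that mean by $2\mathcal{R}_n(\mathcal{H}_{\text{rot}})$. This is routine and contributes the additive $\sqrt{\log(2/\delta)/(2n)}$ term; no new ideas are needed here.

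The substantive work is the upper bound on $\mathcal{R}_n(\mathcal{H}_{\text{rot}})$. My plan is to use a covering-number argument combined with Dudley's entropy integral. Each hypothesis $h\in\mathcal{H}_{\text{rot}}$ is parametrized by $T$ rotation matrices $(\mathbf{R}^1,\dots,\mathbf{R}^T)\in SO(d)^T$ together with the layerwise fusion coefficients $\{\lambda^{(l)}\}$. Since $SO(d)$ is a compact Riemannian manifold of dimension $d(d-1)/2=\mathcal{O}(d^2)$ with bounded diameter, its $\varepsilon$-covering number in the operator norm satisfies $\log N(SO(d),\varepsilon)\le c\,d^2\log(1/\varepsilon)$. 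Taking the product over $T$ tasks, the joint rotation space admits
\[
\log N\bigl(SO(d)^T,\varepsilon\bigr)\;\le\;c\,d^2 T\log(1/\varepsilon).
\]
The coefficient vector contributes only a $\mathcal{O}(L\log(1/\varepsilon))$ term, which is dominated for $d^2T\gtrsim L$. I would then establish Lipschitz continuity of the map from $(\mathbf{R}^t,\lambda)$ to the induced loss (using $\ell$ $L$-Lipschitz in the logits, $\|\phi(x)\|_2\le R$, and the isometric action of $SO(d)$), so that a perturbation of size $\varepsilon$ in parameter space produces at most $\mathcal{O}(\varepsilon)$ change pointwise in the loss class. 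A data-dependent covering of the induced loss class inherits the same logarithmic entropy estimate up to constants.

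With this metric entropy in hand, I would plug into Dudley's chaining integral
\[
\mathcal{R}_n(\mathcal{H}_{\text{rot}})\;\le\;\frac{c}{\sqrt{n}}\int_{0}^{D}\sqrt{\log N(\mathcal{H}_{\text{rot}},\varepsilon)}\,d\varepsilon,
\]
where $D$ is the (bounded) diameter of the loss class. Substituting the bound $\log N\lesssim d^2 T\log(1/\varepsilon)$ and choosing the lower cutoff at scale $1/n$ (below which discretization error is absorbed), the integral evaluates to $\mathcal{O}\bigl(\sqrt{d^2 T\log(nT)/n}\bigr)$, as claimed. Combining with the first inequality yields the theorem.

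\paragraph{Main obstacle.}
The hardest part is not the chaining itself but rigorously justifying the Lipschitz control of the composed map \emph{rotations $\to$ merged parameters $\to$ logits $\to$ loss}, because the fusion coefficients $\{\lambda^{(l)}\}$ act multiplicatively on the task vectors and the rotation only appears at the final representation layer. I would handle this by separately bounding (i) the operator-norm perturbation of each $\mathbf{R}^t$ on bounded features $\phi(x)$, using $\|\mathbf{R}^t\phi(x)-\tilde{\mathbf{R}}^t\phi(x)\|_2\le R\|\mathbf{R}^t-\tilde{\mathbf{R}}^t\|_{\text{op}}$, and (ii) the Lipschitz dependence of the logits on $\lambda$ under a boundedness assumption on the task vectors. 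The product structure then lets me take the maximum over the $T$ task-specific rotations inside the covering, which is precisely what produces the factor $d^2 T$ rather than $d^2$ in the final rate.
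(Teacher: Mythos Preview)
Your proposal is correct and shares the paper's high-level strategy (covering-number control of $SO(d)$ together with the standard McDiarmid-plus-symmetrization wrapper for the first inequality), but the mechanism you use to pass from the covering estimate to the Rademacher bound is genuinely different. You cover the full product $SO(d)^T$ at once, obtaining $\log N(SO(d)^T,\varepsilon)\lesssim d^2T\log(1/\varepsilon)$, establish Lipschitz continuity of the parameter-to-loss map, and then run Dudley's entropy integral on the induced loss class. The paper instead decomposes the hypothesis class additively over the $T$ task components, invokes subadditivity of the Rademacher complexity to write $\mathcal{R}_n(\mathcal{H}_{\text{rot}})\le\sum_{t=1}^T\mathcal{R}_n\bigl(\{\mathbf{x}\mapsto\theta_t^\top\mathbf{R}^t\phi(\mathbf{x})\}\bigr)$, bounds each summand by $\mathcal{O}(\sqrt{d^2/n})$ using the single-$SO(d)$ cover, and attributes the $\sqrt{\log(nT)}$ factor to a union bound over tasks. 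Your route is arguably cleaner: Dudley on the product cover delivers the $\sqrt{d^2T/n}$ scaling directly, whereas the paper's displayed chain $T\cdot\mathcal{O}(\sqrt{d^2/n})\cdot\sqrt{\log(nT)}$ is literally $\sqrt{d^2T^2\log(nT)/n}$, a factor $\sqrt{T}$ looser than the stated conclusion. Your explicit treatment of the Lipschitz composition (rotations acting isometrically on bounded features, coefficients bounded) is also more detailed than the paper's one-line appeal to ``Lipschitz composition properties.''
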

% \vspace{-.2cm}

Theorem~\ref{thm:generalization_corrected} establishes a generalization bound for direction-merged models through Rademacher complexity. Importantly, feature alignment reduces the effective hypothesis space by constraining representations onto coherent geometric structures, thereby lowering the capacity term $\mathcal{R}_n(\mathcal{H}_{\text{rot}})$. Compared with naive parameter aggregation, alignment eliminates redundant degrees of freedom across tasks, which tightens the bound from $\mathcal{O}\!\left(\sqrt{\tfrac{d^2 T \log(nT)}{n}}\right)$ toward a smaller effective complexity. This indicates that feature alignment not only preserves empirical performance but also systematically improves generalization by reducing overfitting risks across tasks.

\begin{proof}[Proof of Theorem~\ref{thm:generalization_corrected}]

\textbf{Hypothesis Space Decomposition.} Define:
\[
\mathcal{H}_{\text{rot}} 
= \Big\{ \mathbf{x} \mapsto g\!\Big(\sum_{t=1}^T \theta_t^\top \mathbf{R}^t \phi(\mathbf{x})\Big), 
\; \mathbf{R}^t \in SO(d) \Big\}.
\]

\textbf{Covering Number Analysis.} 
The covering number of $SO(d)$ satisfies 
\[
\mathcal{N}(SO(d), \epsilon) \leq (C/\epsilon)^{d(d-1)/2}
\]
for some universal constant $C$. 
The simplex constraint on task weights introduces additional logarithmic factors.

\textbf{Composition Bounds.} 
Using Lipschitz composition properties and a union bound over $T$ tasks:
\begin{align*}
\mathcal{R}_n(\mathcal{H}_{\text{rot}}) 
&\leq \sum_{t=1}^T \mathcal{R}_n\!\left(\{\mathbf{x} \mapsto \theta_t^\top \mathbf{R}^t \phi(\mathbf{x})\}\right) \\
&\leq T \cdot \mathcal{O}\!\left(\sqrt{\tfrac{d^2}{n}}\right) \cdot \sqrt{\log(nT)} \\
&= \mathcal{O}\!\left(\sqrt{\tfrac{d^2 T \log(nT)}{n}}\right).
\end{align*}
The logarithmic factor arises from the union bound and discretization of continuous parameter spaces. 
\qedhere
\end{proof}

\subsection{Computational Complexity}
\label{cc}

\subsubsection{Notation}
Table~\ref{tab:notation} summarizes the main symbols used in this section.

\begin{table}[h]
\centering
\caption{Notation summary.}
\label{tab:notation}
\begin{tabular}{@{}ll@{}}
\toprule
Symbol & Meaning \\
\midrule
$T$ & Number of tasks (models to be merged). \\
$L$ & Number of network layers in each model. \\
$n$ & Matrix dimension of each layer weight ($n\times n$ for simplicity). \\
$m$ & Smaller dimension for non-square layers ($m=\min(p,q)$ for a $p\times q$ matrix). \\
$\tau_t^{(\ell)}$ & Parameter matrix of task $t$ at layer $\ell$. \\
$U_t^{(\ell)},\Sigma_t^{(\ell)},V_t^{(\ell)}$ & Singular value decomposition (SVD) components of $\tau_t^{(\ell)}$. \\
$U_{\text{cat}}^{(\ell)},V_{\text{cat}}^{(\ell)}$ & Concatenated matrices of top-$k$ components across tasks (for shared subspace). \\
$W_{\text{ETF}}$ & Equiangular Tight Frame (ETF) basis matrix. \\
$C$ & Number of classes (ETF dimension). \\
$\mathcal{O}(\cdot)$ & Big-O notation for asymptotic computational complexity. \\
\bottomrule
\end{tabular}
\end{table}

\subsubsection{Complexity of Algorithm \ref{alg:1}}
For each layer $\ell=1,\dots,L$, Algorithm~\ref{alg:1} performs three main computational steps.

\paragraph{Per-task SVDs.}
For each task $t=1,\dots,T$, the SVD
\[
\tau_t^{(\ell)} = U_t^{(\ell)} \Sigma_t^{(\ell)} {V_t^{(\ell)}}^{\!\top}
\]
requires $\mathcal{O}(n^3)$ time. The total cost across all layers and tasks is $\mathcal{O}(T L n^3)$.

\paragraph{Concatenated SVDs.}
After concatenating the top-$k$ components of all tasks into $U_{\text{cat}}^{(\ell)}$ and $V_{\text{cat}}^{(\ell)}$, two additional SVDs are performed. Each SVD costs $\mathcal{O}(n^3)$, yielding an additional $\mathcal{O}(2 L n^3)$.

\paragraph{ETF projection.}
The ETF projection,
\[
\tau_{\text{etf}}^{(\ell)} = \tau_{\text{share}}^{(\ell)} W_{\text{ETF}}^{\!\top} W_{\text{ETF}},
\]
involves two matrix multiplications of size $n\times C$ and $C\times n$, with per-layer cost $\mathcal{O}(n^2 C)$, or $\mathcal{O}(n^2 L C)$ in total.

\paragraph{Overall complexity.}
Combining all terms gives:
\begin{equation}
\boxed{\mathcal{O}\big((T+2)\,L\,n^{3} + n^{2}L C\big)}.
\end{equation}
Since $C\!\ll\!n$ in most practical cases, the ETF term is negligible relative to the cubic SVD cost.

\subsubsection{Complexity Comparison with Baselines}
Table~\ref{tab:complexity-baselines} compares our method with existing baselines in terms of their main SVD operations and asymptotic complexity per layer. Complexity expressions follow the standard cubic-time assumption for dense SVD.

\begin{table}[htbp]
\vspace{-.3cm}
\centering
\caption{Computational complexity comparison per layer across methods. All methods involve $L$ layers with weight dimension $n\times n$.}
\label{tab:complexity-baselines}
\resizebox{\linewidth}{!}{
\begin{tabular}{lcc}
\toprule
Method & Main SVD operations per layer & Total Complexity \\
\midrule
ISO-C~\citep{marczak2025no} & One SVD on $\Delta_{\text{TA}}$ & $\mathcal{O}(L n^3)$ \\
ISO-CTS~\citep{marczak2025no} & One SVD on $\Delta_{\text{TA}}$ + $T$ on $\Delta_t$ + two orthogonalization SVDs & $\mathcal{O}((T\!+\!3)L n^3)$ \\
TSV-M~\citep{gargiulo2025task} & $T$ task SVDs + two for reconstruction & $\mathcal{O}((T\!+\!2)L n^3)$ \\
\midrule
\textbf{Ours (Alg.~\ref{alg:1})} & $T$ per-task SVDs + two concatenated SVDs + ETF projection & $\mathcal{O}((T\!+\!2)L n^3 + n^2 L C)$ \\
\bottomrule
\end{tabular}}
\vspace{-.4cm}
\end{table}

\noindent When $C\!\ll\!n$, our method scales similarly to TSV-M and Iso-CTS in asymptotic order, while introducing only a lightweight ETF projection step.

\subsubsection{Data-based Alignment: Empirical Overhead}
When optimizing rotation matrices and fusion coefficients with data, the additional wall-clock cost per epoch is marginal (about $1\!-\!3\%$). The dominant extra FLOPs come from the Procrustes step (solving for the optimal rotation via SVD).

\begin{table}[htbp]
% \vspace{-.3cm}
\centering
\caption{Per-epoch timing breakdown (averaged across architectures). The Procrustes step accounts for a small fraction ($\sim$1--3\%) of total training time.}
\label{tab:timing}
\resizebox{\linewidth}{!}{
\begin{tabular}{lccccc}
\toprule
Model & Forward (s) & Backward (s) & Procrustes (s) & Epoch total (s) & Procrustes ratio \\
\midrule
ViT-L/14 & 10.82 & 3.33 & 0.24 & $\approx$14.47 & 1.7\% \\
ViT-B/32 & 4.18  & 0.87 & 0.16 & $\approx$5.27  & 3.0\% \\
ViT-B/16 & 6.12  & 1.12 & 0.16 & $\approx$7.48  & 2.2\% \\
\bottomrule
\end{tabular}}
% \vspace{-.4cm}
\end{table}

\paragraph{FLOPs analysis.}
Table~\ref{tab:flops} reports the floating-point operation counts. The counters measure computational cost rather than loss components.

\begin{table}[htbp]
% \vspace{-.3cm}
\centering
\caption{FLOPs of rotation and alignment components per epoch. The Procrustes step dominates the additional FLOPs cost.}
\label{tab:flops}
% \resizebox{\linewidth}{!}{
\begin{tabular}{lccc}
\toprule
Model & rot\_forward & rot\_total & procrustes \\
\midrule
ViT-L/14 & $3.78\times10^{7}$ & $1.13\times10^{8}$ & $9.06\times10^{8}$ \\
ViT-B/32 & $1.68\times10^{7}$ & $5.03\times10^{7}$ & $2.68\times10^{8}$ \\
ViT-B/16 & $1.68\times10^{7}$ & $5.03\times10^{7}$ & $2.68\times10^{8}$ \\
\bottomrule
\end{tabular}
% \vspace{-.4cm}
\end{table}

\paragraph{Interpretation.}
The alignment step, comprising rotation application and Procrustes optimization, adds negligible wall-clock time and a manageable FLOPs increase dominated by the SVD-based Procrustes computation.

\section{Related Work}

\textbf{Neural Collapse.}  Recently, several studies~\citep{ji2021unconstrained, zhu2021geometric, tirer2022extended, zhu2023bridging, li2023no, xie2023neural, yang2023neural, beaglehole2024average, fisher2024pushing, guo2024cross, kothapalli2024neural, sukenik2024deep, chen2024neural} have utilized this phenomenon to guide the training process in imbalanced data sets. Among them, \citet{yang2023neural} introduce a continual learning framework that pre-allocates a fixed number of classes within a simplex ETF, thereby guiding the representation learning of minority classes in subsequent incremental steps. This design enforces intra-class feature convergence to predetermined positions while maximizing and uniformly separating inter-class features. To address class imbalance, \citet{li2023no} leverage its geometric structure under distributed conditions to align the representation directions across clients, while preserving client-specific characteristics through fine-tuning. The aforementioned methods only consider collapsing the same class onto a single point in a fixed simplex ETF without accounting for the impact of intra-class spurious correlations. \cite{kothapalli2022neural} theoretically analyze the generalization benefits induced by neural collapse and propose a generalization bound based on the Class Distance Normalized Variance (CDNV), which demonstrates that training-induced collapse can facilitate generalization, albeit requiring substantial data support. To the best of our knowledge, we are the first to revisit the importance of directional alignment through the lens of neural collapse, conducting the analysis from both the parameter space and the feature space.

\section{Baselines}
\label{Baselines}
\textbf{Data-Free Methods}

Task Arithmetic (TA) \citep{ilharco2022editing} introduces a simple yet effective approach by treating task-specific knowledge as arithmetic operations in the parameter space. It computes task vectors as the difference between fine-tuned and pre-trained parameters, enabling direct manipulation through addition and negation operations to compose or forget specific capabilities.

Concrete Task Arithmetic \citep{tang2023concrete} proposes a multi-task model merging method based on continuous relaxation of discrete (Concrete) subspace learning. The key idea is to identify common low-dimensional subspaces in the parameter space and exploit the shared information therein to effectively mitigate task interference in multi-task fusion, while preserving overall performance as much as possible. We formulate the problem as a bi-level optimization and introduce a meta-learning framework, where gradient-based techniques are employed to learn shared Concrete masks that guide model merging within the subspace. Experiments across multiple tasks in both vision and language domains validate the effectiveness of the proposed approach, showing substantial improvements over existing Task Arithmetic and AdaMerging methods.

Ties-Merging \citep{yadav2023ties}, a novel method for merging multiple task-specific models into a single multitask model without additional training. Existing merging techniques suffer from performance degradation due to parameter interference caused by redundant values and sign conflicts across models. Ties-Merging addresses these issues through three key steps: trimming low-magnitude parameters, electing dominant signs, and disjointly merging only aligned values. Extensive experiments across NLP and vision domains, various model architectures, and fine-tuning settings demonstrate that Ties-Merging consistently outperforms prior methods, achieving significant gains in both in-domain and out-of-domain generalization. Our work highlights the critical role of resolving sign interference and provides a robust, hyperparameter-efficient solution for model merging.

Consensus Merging \citep{wang2024localizing} consistently improves upon previous model merging techniques such as Task Arithmetic and TIES, achieving state-of-the-art performance. Additionally, our compression scheme using TALL-masks reduces storage requirements by over 85\% while retaining 99.7\% of the original performance, demonstrating a highly effective trade-off between model efficiency and multi-task capability.

AWD Task Arithmetic \citep{xiong2024multi} uses the following techniques by explicitly promoting orthogonality among task vectors. By decomposing task vectors into a shared redundant component and disentangled orthogonal components, AWD effectively reduces inter-task interference while preserving task-specific performance. When integrated with Task Arithmetic and AdaMerging, AWD achieves state-of-the-art results on multi-task benchmarks, demonstrating strong generalization and robustness across varying task numbers and scaling coefficients.

PCB-Merging \citep{du2024parameter} proposes a novel, training-free model merging method, which effectively balances parameter-level competition across tasks via intra- and inter-task balancing mechanisms, leading to significant performance gains in cross-task, cross-domain, and out-of-domain generalization settings without requiring additional data or retraining.

Task Singular Vectors (TSV) \citep{gargiulo2025task} proposes a novel model merging framework, which leverages singular value decomposition at the layer level to compress task-specific updates and reduce interference through a geometrically-informed whitening transformation, achieving state-of-the-art performance without additional training or validation data.

Isotropic Model Merging (ISO) \citep{marczak2025no} proposes a novel model merging framework based on the analysis of the correlation between subspace alignment and model merging performance, which enhances alignment between task-specific and merged model subspaces by flattening the singular value spectrum of the merged task matrix and incorporating both common and task-specific directions.

DOGE Task Arithmetic \citep{wei2025modeling} proposes a novel adaptive projective gradient descent framework for multi-task model merging, which formulates the problem as a constrained optimization objective to minimize performance gaps with individual task-specific models while preserving shared knowledge through subspace-constrained gradient updates and training-free task-aware merging coefficients.

\textbf{Data-Based Optimization Methods}

AdaMerging \citep{yang2023adamerging} adaptively learns merging coefficients through gradient-based optimization on a small validation set. It introduces both task-wise and layer-wise coefficient learning strategies, allowing fine-grained control over the merging process based on actual task performance.

Concrete AdaMerging \citep{tang2023concrete} extends AdaMerging with concrete masks, combining the benefits of adaptive coefficient learning with parameter-level masking. This dual optimization approach achieves superior performance by simultaneously learning what and how much to merge.

Representation Surgery \citep{yang2024representation} operates at the representation level rather than parameter space. It aligns and merges intermediate representations across models using surgical precision, employing optimal transport and feature matching techniques to ensure semantic consistency.

AWD AdaMerging \citep{xiong2024multi} integrates Adaptive Weight Disentanglement with AdaMerging's optimization framework. This combination leverages both the structural decomposition of AWD and the data-driven optimization of AdaMerging for enhanced performance.

DOGE AdaMerging \citep{wei2025modeling} extends the DOGE framework to the data-based setting, utilizing validation data to guide the discrete optimization process. It employs advanced search strategies and ensemble techniques to discover optimal merging configurations that maximize multi-task performance.

These baselines represent the current state-of-the-art in model merging, spanning from simple arithmetic operations to sophisticated optimization frameworks, providing a comprehensive benchmark for evaluation.

\section{Datasets}
\label{Datasets}

For vision tasks, we evaluate our approaches over three benchmark suites comprising 8, 14, and 20 tasks, respectively. The first suite, introduced in~\citep{radford2021clip}, consists of eight datasets: Stanford Cars~\citep{krause20133d}, DTD~\citep{cimpoi2014describing}, EuroSAT~\citep{helber2019eurosat}, GTSRB~\citep{stallkamp2011german}, MNIST~\citep{lecun2002gradient}, RESISC45~\citep{cheng2017remote}, SUN397~\citep{xiao2016sun}, and SVHN~\citep{netzer2011reading}. 

The 14-task benchmark extends the above by incorporating six additional datasets: CIFAR-100~\citep{krizhevsky2009learning}, STL10~\citep{coates2011analysis}, Flowers102~\citep{nilsback2008automated}, Oxford-IIIT Pet~\citep{parkhi2012cats}, PCAM~\citep{veeling2018rotation}, and FER2013~\citep{goodfellow2013challenges}. 

Finally, the 20-task benchmark includes all the previous 14 tasks plus six more: EMNIST~\citep{cohen2017emnist}, CIFAR-10~\citep{krizhevsky2009learning}, Food101~\citep{bossard2014food}, Fashion-MNIST~\citep{xiao2017fashion}, Rendered-SST2~\citep{socher2013recursive}, and KMNIST~\citep{clanuwat2018deep}. To investigate the effect of model capacity, we evaluate our method using three CLIP~\citep{radford2021clip} variants, each employing a different ViT~\citep{dosovitskiy2020image} visual encoder: ViT-B/32, ViT-B/16, and ViT-L/14.

SUN397: A large-scale scene recognition dataset containing 397 categories with over 100K images. The dataset covers a diverse range of indoor and outdoor scenes, such as churches, bedrooms, highways, and landscapes, making it one of the most comprehensive scene benchmarks. It is widely used to evaluate the generalization ability of visual recognition models, especially in transfer learning and domain generalization studies.

Stanford Cars: A fine-grained object classification dataset with 196 car models and ~16K images. Each image is annotated with the make, model, and year of the car, requiring models to distinguish between visually similar subcategories. It is a standard benchmark for fine-grained recognition and is often used to evaluate representation learning, attention mechanisms, and metric learning methods.

RESISC45: A remote sensing image scene classification dataset comprising 31.5K images across 45 categories. Images are collected from Google Earth, covering diverse scenes such as residential areas, airports, rivers, and forests. The dataset is designed to evaluate robustness under varying imaging conditions (viewpoints, illumination, resolutions), and is widely used in remote sensing and geospatial vision research.

EuroSAT: A remote sensing dataset based on Sentinel-2 satellite imagery, containing 27K images across 10 land-use and land-cover categories, such as industrial areas, forests, and highways. With multispectral bands available, it provides a valuable benchmark for both RGB and multispectral classification tasks. EuroSAT is widely used for studying domain adaptation, few-shot learning, and environmental monitoring applications.

SVHN: The Street View House Numbers dataset, with 73K training images and 10 digit categories, extracted from Google Street View. Unlike MNIST, SVHN features digits in natural scene images, often with cluttered backgrounds, varying scales, and illumination conditions. It is considered a more challenging real-world digit recognition benchmark and is widely used to test the robustness of deep learning models.

GTSRB: The German Traffic Sign Recognition Benchmark, containing 50K images across 43 traffic sign categories. Images vary significantly in lighting, perspective, and occlusion, making it highly representative of real-world autonomous driving conditions. GTSRB is a standard benchmark for evaluating models in traffic sign recognition, robustness under distribution shifts, and safety-critical perception tasks.

MNIST: A canonical digit classification dataset with 70K grayscale images of handwritten digits (0–9). As one of the earliest benchmarks in computer vision, MNIST remains a widely used baseline for algorithm prototyping and teaching, despite being relatively easy for modern deep learning models. It has inspired many extended datasets (e.g., EMNIST, Fashion-MNIST, KMNIST).

DTD: The Describable Textures Dataset, containing 5.6K images across 47 categories, each corresponding to a human-describable texture attribute (e.g., “bumpy,” “striped,” “zigzagged”). Unlike object or scene classification datasets, DTD emphasizes fine-grained texture perception, making it useful for evaluating mid-level representations and generalization across visual domains.

Flowers102: A fine-grained flower recognition dataset with 102 categories and 8K images. Each category corresponds to a specific flower species, with large intra-class variation due to differences in viewpoint, lighting, and scale. Flowers102 is frequently used for transfer learning and fine-grained classification, particularly in evaluating few-shot and zero-shot learning algorithms.

PCAM: The PatchCamelyon dataset for histopathology image classification, containing 327K image patches with binary labels (tumor vs. normal). Derived from the CAMELYON16 challenge, it is widely used in medical imaging benchmarks to evaluate weakly supervised learning, multiple-instance learning, and robustness in clinical decision support systems.

FER2013: A facial expression recognition dataset with 35.8K grayscale images across 7 emotion categories (e.g., happy, sad, angry, surprised). Images were collected in the wild from the ICML 2013 Kaggle competition, making the dataset challenging due to variations in pose, illumination, and occlusion. It is widely used for affective computing and human-computer interaction research.

Oxford-IIIT Pet: A fine-grained object dataset containing 37 pet breeds with ~7K images. Each image includes both class labels and pixel-level segmentation masks, enabling tasks in both classification and semantic segmentation. The dataset is commonly used for fine-grained recognition and multi-task learning in vision.

STL-10: A dataset similar to CIFAR but with higher-resolution images (96×96), containing 10 categories and 13K labeled images. In addition to the labeled set, it includes a large set of 100K unlabeled images, making it well-suited for semi-supervised and unsupervised representation learning. STL-10 is often used to evaluate scalable learning methods.

CIFAR-100: A widely used benchmark dataset with 100 fine-grained categories and 60K images. Each category has 500 training and 100 testing images, requiring models to handle limited data per class. CIFAR-100 is considered more challenging than CIFAR-10 and is a standard benchmark for evaluating the scalability of deep learning models.

CIFAR-10: A subset of CIFAR with 10 categories and 60K images. It remains one of the most widely used datasets for small-scale object classification, serving as a testbed for model architectures, optimization strategies, and regularization techniques.

Food-101: A large-scale food recognition dataset with 101 categories and 101K images. Each category represents a distinct food dish, with significant variation in presentation and context. It is a benchmark for fine-grained recognition, domain adaptation, and applications in food computing and lifestyle analysis.

Fashion-MNIST: A drop-in replacement for MNIST, containing 70K grayscale images from 10 clothing categories such as T-shirts, trousers, and coats. Compared to MNIST, Fashion-MNIST provides a more challenging benchmark while maintaining the same data format, making it a popular choice for evaluating lightweight models.

EMNIST: The Extended MNIST dataset, comprising 800K images across up to 62 categories (digits and uppercase/lowercase letters). It provides multiple splits (balanced, letters, digits, etc.), making it a versatile benchmark for handwritten character recognition and multi-class classification tasks.

KMNIST: A dataset of 70K grayscale images from 10 categories of handwritten Japanese characters (Kuzushiji). As a more challenging alternative to MNIST, KMNIST captures complex character structures and is used to evaluate transfer learning, few-shot learning, and robustness across different scripts.

Rendered SST-2: A rendered version of the Stanford Sentiment Treebank (SST-2), where binary sentiment labels (positive vs. negative) are transformed into visual classification tasks through rendered images. It bridges NLP and computer vision by enabling cross-modal evaluation, particularly in vision-language pretraining and multimodal learning research.

\section{Method}
\label{methods}

As illustrated in Figure \ref{fig:overview11}, We provide a schematic illustration of our proposed method. In the parameter space, we leverage an Equiangular Tight Frame (ETF) to enforce directional alignment across the subspaces associated with different classes, thereby establishing a consistent geometric structure among task-specific parameters. In the feature space, each task is assigned an initialized alignment matrix that guides the orientation of its representations. During training, we jointly optimize both the alignment matrices and the fusion coefficients, which allows the model to adaptively refine the directions and magnitudes of the merged representations. This joint optimization yields a more coherent alignment across tasks and ultimately leads to improved performance and generalization.

\begin{figure}[t!]
  \setlength{\abovecaptionskip}{-0.1cm}
  \setlength{\belowcaptionskip}{-0.2cm}
  % \vspace{-.1cm}
  \centering

    \includegraphics[width=\columnwidth]{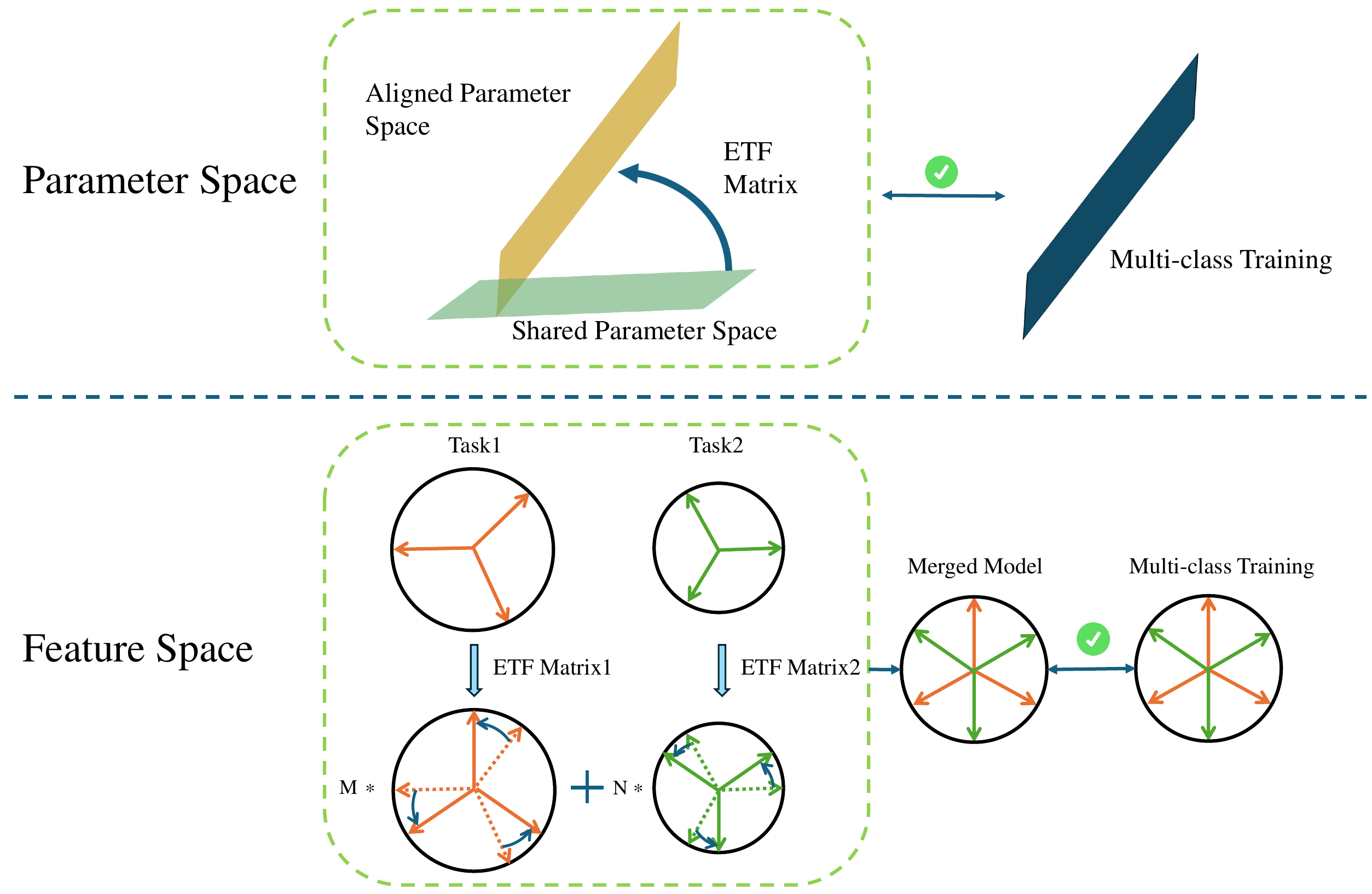}
    % \vspace{-0.4cm}
  \caption{\textbf{Top Figure:} Description of our approach for directional alignment in the parameter space. \textbf{Bottom Figure:} Description of our approach for directional alignment in the feature space.}
  % \vspace{-.45cm}
  \label{fig:overview11}

\end{figure}

\section{Performance}
\label{perf}

\subsection{Implementation Details} 
\label{details}
To assess the role of directional structure in the parameter space, we remove residual scale effects and align the merged parameters with a simplex equiangular tight frame (ETF), thereby enforcing global directional consistency. Meanwhile, to evaluate the importance of feature-space alignment, we draw upon the theory of neural collapse. Specifically, we constrain the penultimate-layer features to align with a simplex ETF, while simultaneously aligning the classifier prototypes to the same simplex structure. This joint constraint, optimized with a learning rate of $1\times 10^{-3}$, updates both the feature rotation matrix and the fusion coefficients, thereby enforcing a coherent geometric configuration that not only validates the importance of direction but also facilitates faster convergence and improved generalization.

Following standard practice in model merging, we adopt the experimental setups used in TSV~\citep{gargiulo2025task} and AdaMerging~\citep{yang2023adamerging} for a fair comparison under both data-free and data-based regimes.

Data-free setting (Algorithm \ref{alg:1}).  
  In the data-free regime, our parameter-space alignment experiments employ exactly the same data splits as those used by TSV.  Each task corresponds to an independently fine-tuned CLIP model on one of the datasets from the 8-, 14-, and 20-task.  No additional data are used during merging; only the fine-tuned model weights are required.

Data-based setting (Algorithm \ref{alg:2}). 
  For the adaptive (feature-space) alignment, we follow the same protocol as AdaMerging and utilize the unlabeled validation/test splits provided for each dataset as the unsupervised set to optimize the rotation matrices and fusion coefficients.  This set corresponds to the same unlabeled data employed by AdaMerging for entropy-based coefficient adaptation.

All experiments therefore share identical dataset partitions across baselines, ensuring direct comparability.  Specifically, the 8-, 14-, and 20-task benchmarks include datasets such as Cars, DTD, EuroSAT, RESISC45, SVHN, GTSRB, MNIST, SUN397, and progressively expanded subsets up to 20 tasks, matching the configurations reported in TSV.  We emphasize that no labeled data are used in the data-free setting, while the data-based setup leverages only the same unlabeled validation data as AdaMerging for feature-space optimization.

\subsection{Results}
\label{results}

\subsubsection{Multi-task performance when merging ViT-B/32 models on 8-task vision benchmark}

\begin{table}[t!]
% \vspace{-.3cm}
\centering
\caption{Multi-task performance when merging ViT-B/32 models on 8-task vision benchmark.}
\resizebox{\linewidth}{!}{
\begin{tabular}{lccccccccc}
\hline
Method & SUN397 & Cars & RESISC45 & EuroSAT & SVHN & GTSRB & MNIST & DTD & Avg. \\ \hline
\multicolumn{10}{c}{\textbf{Non-merging Methods}} \\ 
Pre-trained & 62.3 & 59.7 & 60.7 & 45.5 & 31.4 & 32.6 & 48.5 & 43.8 & 48.0 \\
Individual & 79.2 & 77.7 & 96.1 & 99.7 & 97.5 & 98.7 & 99.7 & 79.4 & 90.8 \\
Traditional MTL & 73.9 & 74.4 & 93.9 & 98.2 & 95.8 & 98.9 & 99.5 & 77.9 & 88.9 \\ \hline

\multicolumn{10}{c}{\textbf{Data-free Methods}} \\
Task Arithmetic & 55.2 & 54.9 & 66.7 & 78.9 & 80.2 & 69.7 & 97.3 & 50.4 & 69.1 \\
Ties-Merging & 59.8 & 58.6 & 70.7 & 79.7 & 86.2 & 72.1 & 98.3 & 54.2 & 72.4 \\
Consensus Merging & 65.7 & 63.6 & 76.5 & 77.2 & 81.7 & 70.3 & 97.0 & 57.1 & 73.6 \\
AWD TA & 63.5 & 61.9 & 72.6 & 84.9 & 85.1 & 79.1 & 98.1 & 56.7 & 75.2 \\
PCB-Merging & 66.7 & 65.5 & 78.5 & 79.3 & 86.4 & 77.1 & 98.2 & 59.1 & 76.3 \\
Concrete TA & 62.5 & 61.1 & 76.0 & 95.7 & 91.0 & 81.9 & 98.5 & 51.9 & 77.3 \\
TSV TA & 67.7 & 70.8 & 87.3 & 95.9 & 93.6 & 94.9 & 84.7 & 86.9 & 85.2 \\ 
ISO TA & 72.1 & \textbf{73.8} & 88.5 & 91.3 & 81.4 & 90.0 & 80.1 & 86.0 & 82.9 \\
ISO-CLS TA & \textbf{72.2} & 73.7  & 87.1 & 88.5 & 75.8 &  86.3  & 78.9 &83.9  & 80.8 \\
DOGE TA & 67.7 & 70.1 & 82.0 & 90.3 & 86.3 & 86.8 & 98.3 & 64.0 & 80.7 \\ 

MDA TA & 70.4 & 71.9 & \textbf{88.9} & \textbf{96.7} & \textbf{94.3} & \textbf{95.4} & \textbf{85.3} & \textbf{88.6} & \textbf{86.4} \\ 
\hline

\multicolumn{10}{c}{\textbf{Data-based Optimization Methods}} \\ 
AdaMerging & 64.5 & 68.1 & 79.2 & 93.8 & 87.0 & 91.9 & 97.5 & 59.1 & 80.1 \\
AdaMerging++ & 66.6 & 68.3 & 82.2 & 94.2 & 89.6 & 89.0 & 98.3 & 60.6 & 81.1 \\
Representation Surgery & 63.8 & 59.9 & 83.3 & \textbf{97.9} & 87.0 & 87.0 & 98.6 & 69.4 & 80.9 \\
AWD AM & 68.1 & 71.4 & 83.4 & 94.8 & 87.7 & 93.6 & 97.9 & 66.1 & 82.9 \\
Concrete AM & 67.8 & 70.0 & 87.5 & 96.0 & 91.6 & \textbf{96.7} & 98.7 & 63.8 & 84.0 \\
TSV AM & 68.1 & 72.1 & 87.7 & 96.4 & 92.3 & 93.3 & \textbf{99.4} & 88.9 & 87.3 \\
DOGE AM & \textbf{70.5} & \textbf{74.8} & 88.7 & 94.1 & 91.6 & 95.7 & 98.8 & 72.5 & 85.9 \\ 
MDA AM & 70.0 & 72.0 & \textbf{89.1} & 97.6 & \textbf{93.7} & 95.1 & \textbf{99.4} & \textbf{91.8} & \textbf{88.6} \\   \hline
\end{tabular}%
}
\label{tab:multi_task_performance}
% \vspace{-.4cm}
\end{table}

We conducted experiments on eight tasks using the ViT-B-32 architecture, based on the checkpoints provided by \citep{marczak2025no, gargiulo2025task}. The results are summarized in Table~\ref{tab:multi_task_performance11}.

\begin{table}[t!]
% \vspace{-.3cm}
\centering
\caption{Multi-task performance when merging ViT-B/32 models on 8-task vision benchmark.}
\resizebox{\linewidth}{!}{
\begin{tabular}{lccccccccc}
\hline
Method & SUN397 & Cars & RESISC45 & EuroSAT & SVHN & GTSRB & MNIST & DTD & Avg. \\ \hline
TSV  TA & 67.2 & 70.3  & 85.7 & 94.5 & 91.6 &  92.2  & 99.3 &84.8  & 85.7 \\
ISO TA & 71.8 & 73.8  & 87.7 & 90.9 & 82.3 &  88.1  & 98.2 &86.0  & 84.9 \\
ISO-CLS TA & 72.4 & 74.0 & 87.9 & 90.6 & 76.9 & 85.3  & 97.5 &85.5 & 83.8 \\
MDA TA & 70.3 & 72.0  & 87.9 & 96.0 & 92.5 &  93.3  & 99.3 &87.5  & 87.4 \\ 
\hline
\end{tabular}}
\label{tab:multi_task_performance11}
% \vspace{-.4cm}
\end{table}

% \begin{table}[t!]
% % \vspace{-.3cm}
% \centering
% \caption{Multi-task performance when merging ViT-B/32 models on 8-task vision benchmark.}
% \resizebox{\linewidth}{!}{
% \begin{tabular}{lccccccccc}
% \hline
% Method & SUN397 & Cars & RESISC45 & EuroSAT & SVHN & GTSRB & MNIST & DTD & Avg. \\ \hline

% \czk{MDA TA} & \czk{70.3} & \czk{72.0}  & \czk{87.9} & \czk{96.0} & \czk{92.5} &  \czk{93.3}  & \czk{99.3} &\czk{87.5}  & \czk{87.4} \\ 
% \hline

% \end{tabular}%
% }
% \label{tab:multi_task_performance111}
% % \vspace{-.4cm}
% \end{table}

Table~\ref{tab:multi_task_performance} presents a comprehensive evaluation of multi-task performance when merging ViT-B/32 models across eight vision datasets. The results clearly demonstrate that incorporating feature-space alignment to optimize task-wise fusion coefficients yields consistent performance improvements across a variety of tasks. Our method, Ours AM, achieves the highest average accuracy of 88.6\%, surpassing all baseline merging methods, including TSV AM (85.8\%) and DOGE AM (85.9\%).

\begin{table}[h]
\centering
\caption{Ablation study on multi-task performance when merging ViT-B/32 models on 8-task vision benchmark.}
\resizebox{\linewidth}{!}{
\begin{tabular}{lccccccccc}
\hline
Method & SUN397 & Cars & RESISC45 & EuroSAT & SVHN & GTSRB & MNIST & DTD & Avg. \\ \hline
Ours TA & \textbf{70.4} & 71.9 & 88.9 & 96.7 & \textbf{94.3} & \textbf{95.4} & 85.3 & 88.6 & 86.4 \\ 
Ours w/o rotation AM & 68.1 & \textbf{72.1} & 87.7 & 96.4 & 92.3 & 93.3 & \textbf{99.4} & 88.9 & 87.3 \\ 
Ours AM & 70.0 & 72.0 & \textbf{89.1} & \textbf{97.6} & 93.7 & 95.1 & \textbf{99.4} & \textbf{91.8} & \textbf{88.6} \\ \hline
\end{tabular}%
}
\label{tab:multi_task_performance1}
\end{table}

Table~\ref{tab:multi_task_performance1} reports the ablation results on ViT-B/32 across eight vision benchmarks. 
Our rotation-aware alignment method (Ours AM) achieves the highest average accuracy of 88.6\%, 
outperforming both the variant without rotation alignment (Ours w/o rotation AM, 87.3\%) and the task-arithmetic baseline (Ours TA, 86.4\%). 
The improvement is particularly pronounced on challenging datasets such as DTD (91.8\% vs. 88.9\%) and RESISC45 (89.1\% vs. 87.7\%), 
where feature distributions are highly heterogeneous. 
This result highlights that incorporating rotation alignment is crucial for preserving feature geometry across tasks.

\subsubsection{Multi-task performance when merging ViT-L/14 models on 8-task vision benchmark}

\begin{table}[h]
\centering
\caption{Multi-task performance when merging ViT-L/14 models on 8-task vision benchmark.}
\resizebox{\linewidth}{!}{
\begin{tabular}{lccccccccc}
\hline
Method & SUN397 & Cars & RESISC45 & EuroSAT & SVHN & GTSRB & MNIST & DTD & Avg. \\ 
\hline
\multicolumn{10}{c}{\textbf{Non-merging Methods}} \\ 
Pre-trained & 66.8 & 77.7 & 71.0 & 59.9 & 58.4 & 50.5 & 76.3 & 55.3 & 64.5 \\
Individual & 82.3 & 92.4 & 97.4 & 100 & 98.1 & 99.2 & 99.7 & 84.1 & 94.2 \\
Traditional MTL & 80.8 & 90.6 & 96.3 & 96.3 & 97.6 & 99.1 & 99.6 & 84.4 & 93.5 \\ \hline

\multicolumn{10}{c}{\textbf{Data-free Methods}} \\ 
Task Arithmetic & 73.9 & 82.1 & 86.6 & 94.1 & 87.9 & 86.7 & 98.9 & 65.6 & 84.5 \\
Ties-Merging & 76.5 & 85.0 & 89.3 & 95.7 & 90.3 & 83.3 & 99.0 & 68.8 & 86.0 \\
Consensus Merging & 75.0 & 84.3 & 89.4 & 95.6 & 88.3 & 82.4 & 98.9 & 68.0 & 85.2 \\
AWD TA & 76.2 & 85.4 & 88.7 & 96.1 & 92.4 & 92.3 & 99.3 & 69.4 & 87.5 \\
PCB-Merging & 76.8 & 86.2 & 89.4 & 96.5 & 88.3 & 91.0 & 98.6 & 73.6 & 87.5 \\
Concrete TA & \textbf{86.2} & 66.9 & \textbf{96.7} & 93.4 & \textbf{99.1} & 89.0 & 74.6 & \textbf{93.6} & 87.4 \\
TSV TA & 78.3 & 90.0 & 94.1 & 98.5 & 95.2 & 96.1 & 99.5 & 78.9 & 91.3 \\
ISO AM & 79.9 & 90.9 & 94.8 & 98.3 & 91.4 & 95.5 & 99.2 & 79.2 & 91.1 \\
DOGE TA & 76.7 & 87.7 & 91.6 & 96.2 & 94.4 & 93.4 & 98.9 & 71.6 & 88.8 \\ 
MDA TA & 79.1 & \textbf{90.6} & 94.7 & \textbf{98.7} & 95.6 & \textbf{97.0} & \textbf{99.6} & 80.4 & \textbf{92.0} \\
\hline

\multicolumn{10}{c}{\textbf{Data-based Optimization Methods}} \\ 
AdaMerging & 79.0 & 90.3 & 90.8 & 96.2 & 93.4 & 98.0 & 99.0 & 79.9 & 90.8 \\
AdaMerging++ & 79.4 & 90.3 & 91.6 & 97.4 & 93.4 & 97.5 & 99.0 & 79.2 & 91.0 \\
Representation Surgery & 75.7 & 84.4 & 93.1 & 98.8 & 91.3 & 93.4 & 99.1 & 76.1 & 89.0 \\
AWD AM & \textbf{79.8} & 90.6 & 91.8 & 97.0 & 93.9 & \textbf{98.4} & 99.2 & 81.1 & 91.5 \\
Concrete AM & 77.8 & \textbf{91.2} & 92.1 & 97.0 & 94.4 & 97.9 & 99.0 & 79.5 & 91.1 \\
% TSV AM & 79.2 & 90.7 & 94.2 & 98.6 & 95.6 & 96.9 & 99.6 & 82.1 & 92.1 \\
% DOGE AM & 79.7 & \textbf{91.6} & 94.4 & 96.7 & \textbf{96.5} & \textbf{98.6} & 99.0 & \textbf{84.1} & 92.6 \\ 
MDA AM & 79.1 & 91.0 & \textbf{94.7} & \textbf{98.9} & \textbf{95.9} & 97.9 & \textbf{99.7} & \textbf{84.0} & \textbf{92.7} \\
\hline
\end{tabular}%
}
\label{tab:multi_task_performance2}
\end{table}

\begin{table}[t!]
% \vspace{-.3cm}
\centering
\caption{Multi-task performance when merging ViT-B/14 models on 8-task vision benchmark.}
\resizebox{\linewidth}{!}{
\begin{tabular}{lccccccccc}
\hline
Method & SUN397 & Cars & RESISC45 & EuroSAT & SVHN & GTSRB & MNIST & DTD & Avg. \\ \hline
TSV  TA & 77.8 & 89.8  & 93.5 & 98.7 & 94.7 & 96.1  & 99.5 &93.2  & 92.9 \\
ISO TA & 79.9 & 91.4  & 94.8 & 99.0 & 90.5 &  95.5  & 99.3 &96.3  & 93.3 \\
ISO-CLS TA & 79.6 & 91.7  & 94.6 & 98.8 & 88.8 &  95.4  & 99.2 &95.6  & 92.9 \\

MDA TA & 78.8 & 90.7  & 94.3 & 99.2 & 95.2 &  97.0  & 99.6 &94.5  & 93.6 \\ 
\hline

\end{tabular}%
}
\label{tab:multi_task_performance22}
% \vspace{-.4cm}
\end{table}

Table~\ref{tab:multi_task_performance2} presents results for ViT-L/14. 
In the data-free regime, our method (Ours TA) achieves an average accuracy of 92.0\%, 
outperforming strong baselines such as TSV (91.3\%) and AWD TA (87.5\%). 
In the optimization-based regime, Ours AM further improves performance to 92.7\%, 
surpassing AdaMerging (90.8\%) and AWD AM (91.5\%). 
Notably, on EuroSAT, Ours AM achieves 98.9\%, which is higher than TSV (98.5\%) and AdaMerging (96.2\%). 
These results indicate that our alignment mechanism generalizes well to large-scale backbones and heterogeneous tasks.

\begin{table}[h]
\centering
\caption{Ablation study on multi-task performance when merging ViT-L/14 models on 8-task vision benchmark.}
\resizebox{\linewidth}{!}{
\begin{tabular}{lccccccccc}
\hline
Method & SUN397 & Cars & RESISC45 & EuroSAT & SVHN & GTSRB & MNIST & DTD & Avg. \\ \hline

MDA TA & 79.1 & 90.6 & \textbf{94.7} & 98.7 & 95.6 & 97.0 & 99.6 & 80.4 & 92.0 \\ 
MDA w/o rotation AM & 78.5 & 90.8 & 94.0 & 98.7 & 95.9 & 97.7 & 99.6 & 83.1 & 92.3 \\ 
MDA AM & \textbf{79.1} & \textbf{91.0} & \textbf{94.7} & \textbf{98.9} & \textbf{95.9} & \textbf{97.9} & \textbf{99.7} & \textbf{84.0} & \textbf{92.7} \\ \hline
\end{tabular}%
}
\label{tab:multi_task_performance3}
\end{table}

Table~\ref{tab:multi_task_performance3} further confirms the benefit of rotation alignment on ViT-L/14. 
Ours AM achieves the best average performance of 92.7\%, compared to Ours w/o rotation AM (92.3\%) and Ours TA (92.0\%). 
Although the gains appear smaller than in ViT-B/32, the improvements are consistent across all tasks (e.g., DTD: 84.0\% vs. 83.1\%, EuroSAT: 98.9\% vs. 98.7\%), 
showing that rotation alignment stabilizes feature fusion and ensures reliable improvements even at larger scales.

\begin{table}[h]
\centering
\caption{
Multi-task performance when merging Flan-T5-base (LoRA fine-tuned) models on all eight tasks.
}
\vspace{0.5em}
\resizebox{0.9\linewidth}{!}{
\begin{tabular}{lcccccccccc}
\toprule
Method & CoLA & MNLI & MRPC & QNLI & QQP & RTE & SST2 & STSB & Avg. \\
% \midrule
\hline
\multicolumn{10}{c}{\textbf{Non-merging Methods}} \\ 
Individual & 69.1 & 82.7 & 85.5 & 90.9 & 84.0 & 84.4 & 92.9 & 87.4 & 84.6 \\
% \\ \hline
\hline
\multicolumn{10}{c}{\textbf{Data-free Methods}} \\ 
Weight Averaging & \textbf{69.7} & 59.7 & 78.9 & 90.1 & 83.8 & 80.5 & 91.2 & 72.0 & 78.2 \\
Task Arithmetic  & 68.8 & 55.2 & 78.7 & 89.8 & 83.7 & 79.1 & 91.5 & 72.4 & 77.4 \\
Ties-Merging     & 68.3 & 56.3 & 79.2 & 89.8 & 83.7 & 79.4 & 91.6 & 71.2 & 77.5 \\
Concrete TA      & 69.1 & 58.1 & 78.4 & 89.9 & 83.5 & 79.4 & 91.6 & 73.4 & 78.0 \\
TSV TA      & 69.3 & \textbf{77.1} & 80.4 & 90.0 & 83.6 & 79.1 & 92.5 & 82.5 & 81.8 \\
ISO TA      & 69.1 & 57.4 & 76.7 & 88.6 & 82.7 & 80.1 & 91.3 & 63.3 & 76.2 \\
DoGE TA & 69.1 & 71.9 & \textbf{80.9} & \textbf{90.3} & 83.5 & 79.8 & \textbf{92.5} & 71.1 & 79.9 \\
MDA TA & 69.5 &76.9 &76.9 & 89.6 & \textbf{83.9} & \textbf{82.4} & \textbf{92.5} & \textbf{84.7} & \textbf{82.1} \\
\hline

\multicolumn{10}{c}{\textbf{Data-based Optimization Methods}} \\
AdaMerging++     & 69.1 & 60.3 & 78.4 & 90.0 & 83.6 & 79.1 & 91.6 & 74.1 & 78.3 \\
Concrete AM      & 69.0 & 59.4 & 80.1 & 89.9 & 82.9 & 79.1 & 91.7 & 75.4 & 78.5 \\
\bottomrule
\end{tabular}}
\label{tab:multi_task_performance4}
\end{table}

We conducted experiments on eight tasks using the ViT-L-14 architecture, based on the checkpoints provided by \citep{marczak2025no, gargiulo2025task}. The results are summarized in Table~\ref{tab:multi_task_performance22}.

\subsubsection{Ablation on the rank dimension $k$.}
\label{k}

When $k = d_\text{out}/T$, the concatenated dimension $kT$ naturally equals $d_\text{out}$, so the shared representation $\tau^{(l)}_{\text{share}}$ directly matches the layer’s output dimension without requiring any truncation. More generally, when $k$ varies across tasks or $kT \neq d_\text{out}$, the formulation still holds:
\begin{enumerate}
    \item If $kT > d_\text{out}$, we simply truncate $U^{(l)}_\text{share}$ to its top-$d_\text{out}$ singular components, preserving the directions that explain the highest shared variance.
    \item If $kT < d_\text{out}$, the missing dimensions can be completed either by zero-padding or implicitly through the subsequent ETF projection.
\end{enumerate}

We investigate the impact of the subspace dimension $k$ on model merging performance. Recall that $k$ controls the number of principal components retained per task in the shared subspace construction. By default, we set $k = d_\text{out}/T$, where $d_\text{out}$ is the output dimension of each layer and $T$ is the number of tasks being merged. This choice ensures that the concatenated dimension $kT$ naturally matches $d_\text{out}$, leading to a balanced contribution from each task without requiring additional normalization or scaling.

Table~\ref{tab:k_ablation} reports the results of varying $k$ across different ViT architectures and task counts. We observe that the performance remains stable near $k = d_\text{out}/T$, with slight fluctuations depending on the total number of tasks. Specifically,Smaller $k$ values (e.g., $k = 0.01\,d_\text{out}$) underrepresent task-specific subspaces and lead to noticeable drops in accuracy,  
while excessively large $k$ values (e.g., $k = 0.5\,d_\text{out}$ or $0.9\,d_\text{out}$) introduce redundancy and degrade generalization.  
Interestingly, when $k$ is chosen close to $d_\text{out}/T$, the model achieves a favorable trade-off between shared and task-specific components,  
and in some cases even surpasses the performance of the setting with $k = d_\text{out}/T$. In most configurations, the setting $k = d_\text{out}/T$ achieves a favorable trade-off between efficiency and accuracy, and is therefore adopted as the default in all main results, consistent with prior work~\citep{gargiulo2025task}. MDA TA 0.01 means that we assign the top $1\%$ proportion of feature dimensions $d_\text{out}$ as the principal components for each task.

\begin{table}[htbp]
\vspace{-.3cm}
\centering
\caption{Ablation study on the subspace dimension $k$ for different ViT architectures and numbers of tasks.}
\label{tab:k_ablation}
\resizebox{\linewidth}{!}{
\begin{tabular}{lccccccccc}
\toprule
\multirow{2}{*}{Method} &
\multicolumn{3}{c}{\textbf{ViT-B/32}} &
\multicolumn{3}{c}{\textbf{ViT-B/16}} &
\multicolumn{3}{c}{\textbf{ViT-L/14}} \\
\cmidrule(lr){2-4} \cmidrule(lr){5-7} \cmidrule(lr){8-10}
 & 8 tasks & 14 tasks & 20 tasks & 8 tasks & 14 tasks & 20 tasks & 8 tasks & 14 tasks & 20 tasks \\
\midrule
\textbf{MDA TA} ($1/T$) & 86.4 & 81.4 & \textbf{77.3} & 89.9 & 85.8 & \textbf{84.7} & 92.0 & \textbf{89.4} & \textbf{88.4} \\
MDA TA 0.01 & 77.9 & 76.3 & 72.6 & 82.0 & 80.5 & 78.2 & 87.8 & 86.6 & 85.4 \\
MDA TA 0.06 & 86.1 & \textbf{81.9} & 77.1 & 90.1 & \textbf{86.1} & 82.5 & 91.9 & 89.3 & 88.2 \\
MDA TA 0.1 & \textbf{86.7} & 81.2 & 75.9 & \textbf{90.6} & 85.5 & 81.1 & \textbf{92.1} & 89.3 & 87.1 \\
MDA TA 0.5 & 77.8 & 70.6 & 66.7 & 83.4 & 76.3 & 73.0 & 87.9 & 82.9 & 79.5 \\
MDA TA 0.9 & 70.5 & 66.8 & 64.3 & 78.3 & 72.9 & 70.5 & 84.2 & 79.8 & 76.9 \\
\bottomrule
\end{tabular}}
\vspace{-.4cm}
\end{table}

\subsubsection{Multi-task performance when merging Flan-T5-base (LoRA fine-tuned) models on all eight tasks.}

Table~\ref{tab:multi_task_performance4} reports multi-task performance on Flan-T5-base (LoRA fine-tuned) across eight NLP benchmarks. 
Our method achieves the highest average score of 82.1\% among data-free methods, outperforming TSV (81.8\%) and DoGE (79.9\%). 
For instance, on STSB, Ours TA obtains 84.7\%, substantially higher than TSV (82.5\%) and Ties-Merging (71.2\%). 
Compared to optimization-based methods, Ours TA (82.1\%) also surpasses AdaMerging++ (78.3\%) and Concrete AM (78.5\%) \emph{without access to training data}, 
demonstrating the effectiveness of our approach in data-free language model merging.

Across both vision and NLP benchmarks, our results demonstrate that \emph{directional alignment} is a key factor in successful model merging. 
Traditional methods often suffer from feature misalignment: when task-specific representations are merged without considering their geometric orientation, 
the resulting model exhibits degraded generalization, especially on heterogeneous datasets. 
Our method aligns task vectors via rotation, which brings two major benefits:  
(1) it preserves the relative geometry of task-specific features, leading to more coherent shared representations;  
(2) it reduces destructive interference between tasks, as misaligned directions in parameter space are corrected before fusion.  
This is empirically validated by the consistent improvements across backbones (ViT-B/32 and ViT-L/14) and modalities (vision and NLP). 
These findings suggest that \emph{geometric alignment is not merely a technical detail but a fundamental principle} for effective model merging.

\subsubsection{Effect of Class-to-Dimension Ratio: Stronger Gains in Vision than in Language Tasks}

\paragraph{Difference in class granularity and dimensional ratios.}
In vision experiments, each benchmark suite contains \textbf{8 (758 classes)}, \textbf{14 (1016 classes)}, or \textbf{20 (1306 classes)} tasks, with the corresponding number of categories ranging from approximately \textbf{758 to over 1,300 classes in total}.  
For models such as \textbf{ViT-B/32 ($d_{\text{out}}$ is typically equal to 768)}, \textbf{ViT-B/16 ($d_{\text{out}}$ is typically equal to 768)}, and \textbf{ViT-L/14 ($d_{\text{out}}$ is typically equal to 1024)}, the total number of classes \emph{exceeds} the feature dimensionality.  
In this regime $(C > d_{\text{out}})$, enforcing balanced angular separation between classes becomes geometrically nontrivial, and \textbf{directional alignment via ETF} plays a critical role in achieving near-uniform separation across class directions.  
Therefore, ETF alignment provides clear structural and generalization benefits for vision models, where the class-to-dimension ratio is high.

\paragraph{Smaller class space in NLP tasks.}
In contrast, the NLP setting (based on \textbf{eight GLUE tasks}) involves far fewer labels—mostly binary or three-way classification problems (e.g., SST-2, MRPC, QQP, RTE).  
Even across all tasks combined, the total number of categories remains below \textbf{20}, which is \emph{much smaller} than the output feature dimension of \textbf{Flan-T5-base ($d_{\text{out}}$ is typically equal to 2048)}.  
In this high-dimensional, low-class regime $(C \ll d_{\text{out}})$, class vectors are already well separated without explicit geometric regularization. Thus, ETF alignment provides \textbf{limited additional improvement}, as the geometry is already near-orthogonal.

\subsubsection{Why ETF Rather Than Orthogonal or Low-Rank Structures?}

\paragraph{Theoretical motivation.}
When the number of classes $C$ exceeds the feature dimension $d_{\text{out}}$, assigning perfectly orthogonal class directions becomes mathematically impossible. Enforcing strict orthogonality in such settings not only distorts meaningful inter-class relations but also wastes representational capacity. In contrast, the Equiangular Tight Frame (ETF) structure provides an \emph{optimal and milder compromise} between angular separation and dimensional constraints:
\begin{itemize}
    \item ETF is the unique configuration on the unit sphere that maintains \textit{equal pairwise angles} and \textit{balanced norms};
    \item When $C > d_{\text{out}}$, orthogonality fails, yet ETF preserves constant pairwise inner products of $-1/(C-1)$;
    \item This property directly aligns with Neural Collapse theory, in which class means converge to a centered simplex ETF at the end of training.
\end{itemize}
Hence, ETF is not strictly orthogonal but represents a relaxation achieving maximal angular separation under dimensional constraints.

\paragraph{Empirical comparison.}
To further validate this design choice, we compared ETF-based alignment against \textbf{Low-Rank Alignment} and \textbf{Orthogonal Alignment} baselines, following the reviewer’s suggestion. The rank in low-rank alignment was fixed to $d_{\text{out}}/4$. Although low-rank alignment preserved variance, it failed to maintain balanced angular separation across tasks, resulting in weaker class discrimination. Orthogonal alignment enforced stronger independence but sacrificed geometric uniformity when $C>d_{\text{out}}$. As shown in Table~\ref{tab:etf_comparison}, both baselines perform consistently worse than our ETF-based method, confirming that enforcing the full ETF geometry yields superior directional consistency and inter-task separability.

\begin{table}[htbp]
\vspace{-.3cm}
\centering
\caption{Comparison between ETF-based, low-rank, and orthogonal alignment strategies across different ViT architectures and task counts.}
\label{tab:etf_comparison}
\resizebox{\linewidth}{!}{
\begin{tabular}{lccccccccc}
\toprule
\multirow{2}{*}{Method} &
\multicolumn{3}{c}{\textbf{ViT-B/32}} &
\multicolumn{3}{c}{\textbf{ViT-B/16}} &
\multicolumn{3}{c}{\textbf{ViT-L/14}} \\
\cmidrule(lr){2-4} \cmidrule(lr){5-7} \cmidrule(lr){8-10}
 & 8 tasks & 14 tasks & 20 tasks & 8 tasks & 14 tasks & 20 tasks & 8 tasks & 14 tasks & 20 tasks \\
\midrule
\textbf{MDA TA (ETF)} & \textbf{86.4} & \textbf{81.4} & \textbf{77.3} & \textbf{89.9} & \textbf{85.8} & \textbf{84.7} & \textbf{92.0} & \textbf{89.4} & \textbf{88.4} \\
MDA TA (Low-Rank) & 79.7 & 74.0 & 68.1 & 85.6 & 79.8 & 75.6 & 90.6 & 87.1 & 85.8 \\
MDA TA (Orthogonal) & 84.7 & 79.7 & 75.0 & 88.6 & 84.2 & 80.5 & 91.2 & 88.2 & 87.0 \\
\bottomrule
\end{tabular}}
\vspace{-.4cm}
\end{table}

\paragraph{Conclusion.}
Both theoretical reasoning and empirical evidence demonstrate that the ETF geometry provides the most balanced and expressive alignment target among feasible configurations. Unlike purely orthogonal or low-rank constraints, ETF regularization maintains uniform angular relationships across classes, preserving both discriminative power and representational efficiency in merged models.

\section{Use of Large Language Models (LLMs)}

In preparing this paper, we used large language models (LLMs) solely as an assistive tool for language polishing and minor writing improvements. The models were not involved in research ideation, experimental design, data analysis, or drawing scientific conclusions. All conceptual and technical contributions are the work of the authors. The authors take full responsibility for the contents of this paper.

\section{Devices}

In the experiments, we conduct all methods on a local Linux server equipped with one AMD EPYC 7742 64-Core Processor (128 logical threads). All methods are implemented using the PyTorch framework, and all models are trained on NVIDIA A800-SXM4-80G GPUs (80GB HBM2e memory).

\end{document}